\def\Figref#1{Figure~\ref{#1}}
\def\secref#1{section~\ref{#1}}
\def\eqref#1{equation~\ref{#1}}
\def\Eqref#1{Equation~\ref{#1}}
\def\1{\bm{1}}
\DeclareMathAlphabet{\mathsfit}{\encodingdefault}{\sfdefault}{m}{sl}
\SetMathAlphabet{\mathsfit}{bold}{\encodingdefault}{\sfdefault}{bx}{n}
\def\gB{{\mathcal{B}}}
\def\gC{{\mathcal{C}}}
\def\gD{{\mathcal{D}}}
\def\gL{{\mathcal{L}}}
\def\gN{{\mathcal{N}}}
\def\gU{{\mathcal{U}}}
\def\gW{{\mathcal{W}}}
\def\gX{{\mathcal{X}}}
\def\gY{{\mathcal{Y}}}
\def\gZ{{\mathcal{Z}}}
\newcommand{\E}{\mathbb{E}}
\newcommand{\R}{\mathbb{R}}
\newcommand{\Var}{\mathrm{Var}}
\DeclareMathOperator*{\argmax}{arg\,max}
\DeclareMathOperator*{\argmin}{arg\,min}
\newcommand{\indep}{\mathrel{\text{\scalebox{1.07}{$\perp\mkern-10mu\perp$}}}}
\def\onedot{$\mathsurround0pt\ldotp$}
\def\eg{\emph{e.g}\onedot, }
\def\ie{\emph{i.e}\onedot, }
\newtheorem{definition}{Definition}
\newtheorem{proposition}{Proposition}
\def\Figref#1{Fig.~\ref{#1}}
\def\secref#1{section~\ref{#1}}
\def\eqref#1{equation~\ref{#1}}
\def\Eqref#1{Eq.~(\ref{#1})}
\def\1{\bm{1}}
\DeclareMathAlphabet{\mathsfit}{\encodingdefault}{\sfdefault}{m}{sl}
\SetMathAlphabet{\mathsfit}{bold}{\encodingdefault}{\sfdefault}{bx}{n}
\def\gB{{\mathcal{B}}}
\def\gC{{\mathcal{C}}}
\def\gD{{\mathcal{D}}}
\def\gL{{\mathcal{L}}}
\def\gN{{\mathcal{N}}}
\def\gU{{\mathcal{U}}}
\def\gW{{\mathcal{W}}}
\def\gX{{\mathcal{X}}}
\def\gY{{\mathcal{Y}}}
\def\gZ{{\mathcal{Z}}}
\newcommand\numberthis{\addtocounter{equation}{1}\tag{\theequation}}
\newsavebox\tmpbox
\title{Controlling directions orthogonal \\ to a classifier}
\author{Yilun Xu, Hao He, Tianxiao Shen, Tommi Jaakkola\\
Computer Science and Artificial Intelligence Lab,\\ Massachusetts Institute of Technology\\
\texttt{\{ylxu, haohe, tianxiao\}@mit.edu};\quad \texttt{tommi@csail.mit.edu} \\
}
\begin{document}

\maketitle

\begin{abstract}

We propose to identify directions invariant to a given classifier so that these directions can be controlled in tasks such as style transfer. While orthogonal decomposition is directly identifiable when the given classifier is linear, we formally define a notion of orthogonality in the non-linear case. We also provide a surprisingly simple method for constructing the orthogonal classifier (a classifier utilizing directions other than those of the given classifier). Empirically, we present three use cases where controlling orthogonal variation is important: style transfer, domain adaptation, and fairness. The orthogonal classifier enables desired style transfer when domains vary in multiple aspects, improves domain adaptation with label shifts and mitigates the unfairness as a predictor. The code is available at \url{https://github.com/Newbeeer/orthogonal_classifier}.
\end{abstract}

\section{Introduction}
Many machine learning applications require explicit control of directions that are orthogonal to a predefined one. For example, to ensure fairness, we can learn a classifier that is orthogonal to sensitive attributes such as gender or race~\citep{Zemel2013LearningFR,Madras2018LearningAF}. Similar, if we transfer images from one style to another, content other than style should remain untouched. Therefore images before and after transfer should align in directions orthogonal to style. Common to these problems is the task of finding an orthogonal classifier. Given any \emph{principal classifier} operating on the basis of \emph{principal variables}, our goal is to find a classifier, termed \emph{orthogonal classifier}, that predicts the label on the basis of \emph{orthogonal variables}, {defined formally later.}

The notion of orthogonality is clear in the linear case. Consider a joint distribution {$P_{XY}$ over $X\in \mathbb{R}^d$ and binary label $Y$. Suppose the label distribution is Bernoulli, \ie $P_Y = \gB(Y;0.5)$ and class-conditional distributions are Gaussian, $P_{X|Y=y} = \gN(X;\mu_y, \sigma_y^2I)$, where the means and variances depend on the label.} If the principal classifier is linear, $w_1=\Pr(Y=1|\theta_1^\top x)$, {any classifier $w_2$, in the set $\gW_2=\{\Pr(Y=1|\theta_2^\top x) \mid \theta_1^\top \theta_2 = 0\}$, is considered orthogonal to $w_1$. Thus the two classifiers $w_1,w_2$, with orthogonal decision boundaries~(\Figref{img:linear}) focus on distinct but complementary attributes for predicting the same label.}

Finding the orthogonal classifier is no longer straightforward in the non-linear case. 
To rigorously define what we mean by the orthogonal classifier, we first introduce the notion of mutually orthogonal random variables that correspond to (conditinally) independent latent variables mapped to observations through a diffeomorphism (or bijection if discrete). Each r.v. is predictive of the label but represents complementary information. Indeed, we show that the orthogonal random variable maximizes the conditional mutual information with the label given the principal counterpart, subject to an independence constraint that ensures complementarity. 

Our search for the orthogonal classifier can be framed as follows: given a principal classifier $w_1$ using some unknown principal r.v. for prediction, how do we find its orthogonal classifier $w_2$ relying solely on orthogonal random variables? The solution to this problem, which we call \emph{classifier orthogonalization}, turns out to be surprisingly simple.  {In addition to the principal classifier, we assume access to a full classifier $w_x$ that predicts the same label based on all the available information, implicitly relying on both principal and orthogonal latent variables. The full classifier can be trained normally, absent of constraints\footnote{{The full classifier may fail to depend on all the features, e.g., due to simplicity bias~\citep{Shah2020ThePO}.}}. We can then effectively ``subtract'' the contribution of $w_1$ from the full classifier to obtain the orthogonal classifier $w_2$ which we denote as $w_2=w_x\setminus w_1$. The advantage of this construction is that we do not need to explicitly identify the underlying orthogonal variables. It suffices to operate only on the level of classifier predictions.} 

We provide several use cases for the orthogonal classifier, either as a predictor or as a discriminator. As a predictor, the orthogonal classifier predictions are invariant to the principal sensitive r.v., thus ensuring fairness. As a discriminator, the orthogonal classifier enforces a partial alignment of distributions, allowing changes in the principal direction. We demonstrate the value of such discriminators in 1) \emph{controlled style transfer} where the source and target domains differ in multiple aspects, but we only wish to align domain A's style to domain B, leaving other aspects intact; 2) \emph{domain adaptation with label shift} where we align feature distributions between the source and target domains, allowing shifts in label proportions. Our results show that the simple method is on par with the state-of-the-art methods in each task. 

\begin{figure*}[t] 
\centering
\small
\begin{tabular}{@{\hspace{0mm}}c@{\hspace{0mm}}c@{\hspace{0mm}}c@{\hspace{0mm}}c@{\hspace{0mm}}}
\includegraphics[width=0.24\columnwidth]{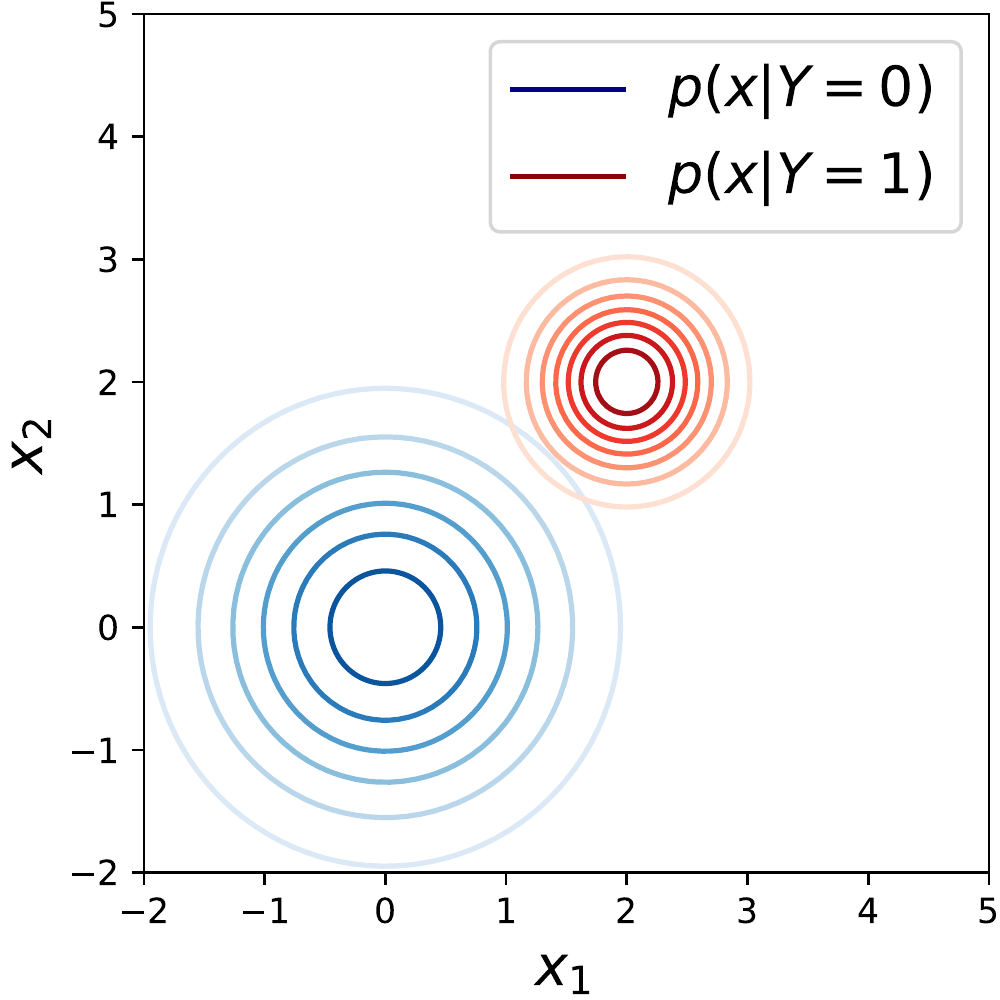} &
\includegraphics[width=0.24\columnwidth]{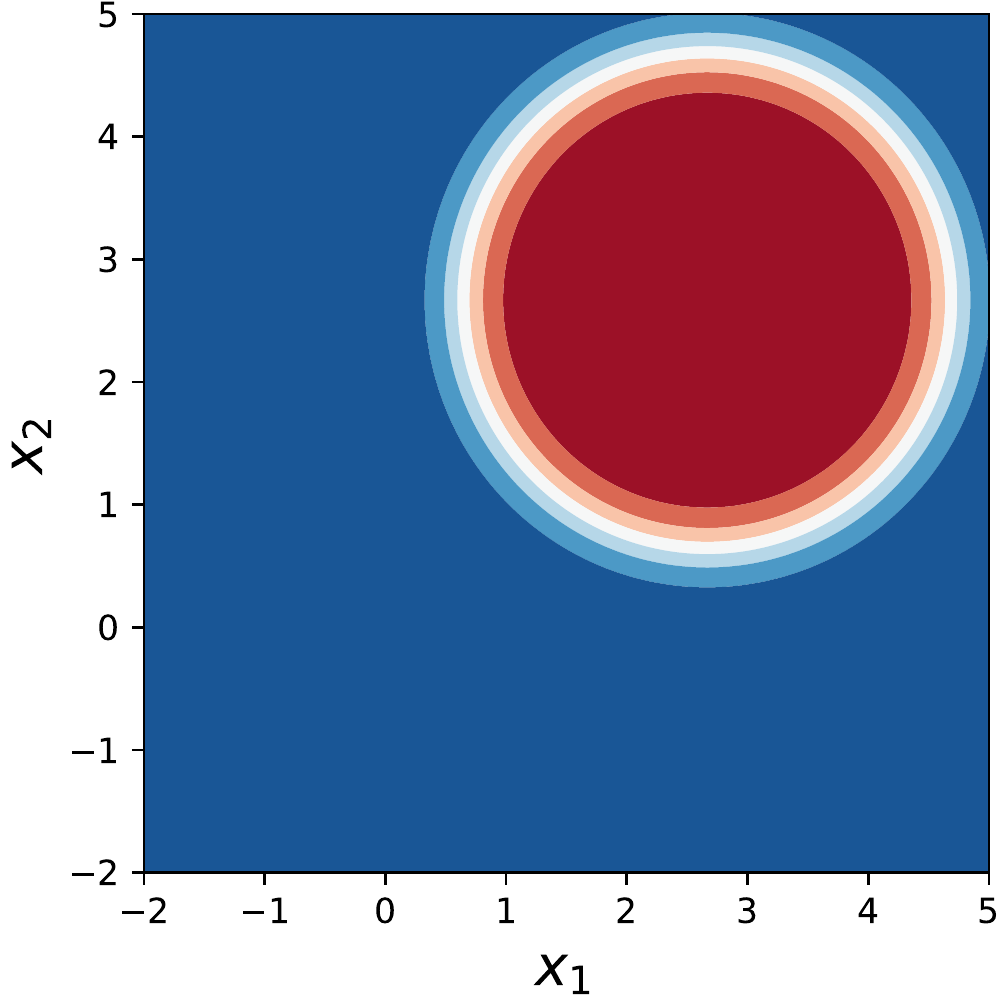} & 
\includegraphics[width=0.24\columnwidth]{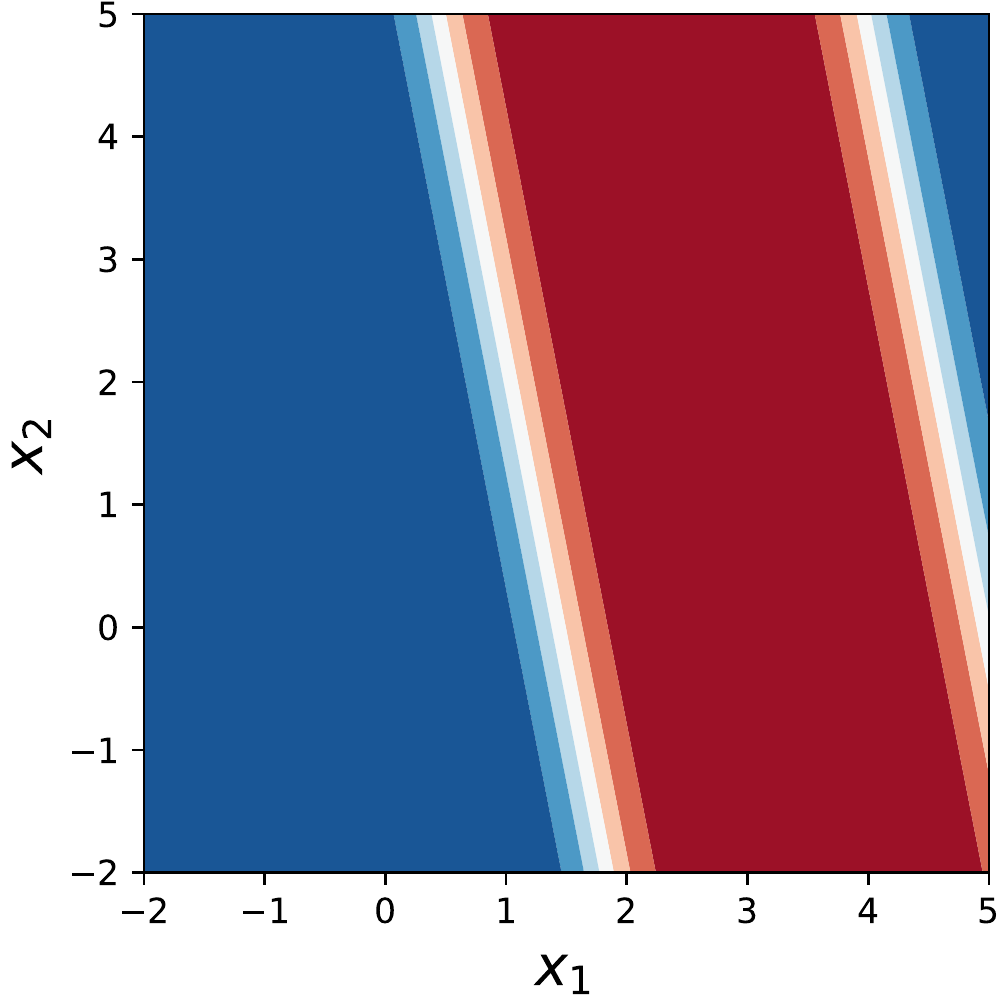} &
\includegraphics[width=0.24\columnwidth]{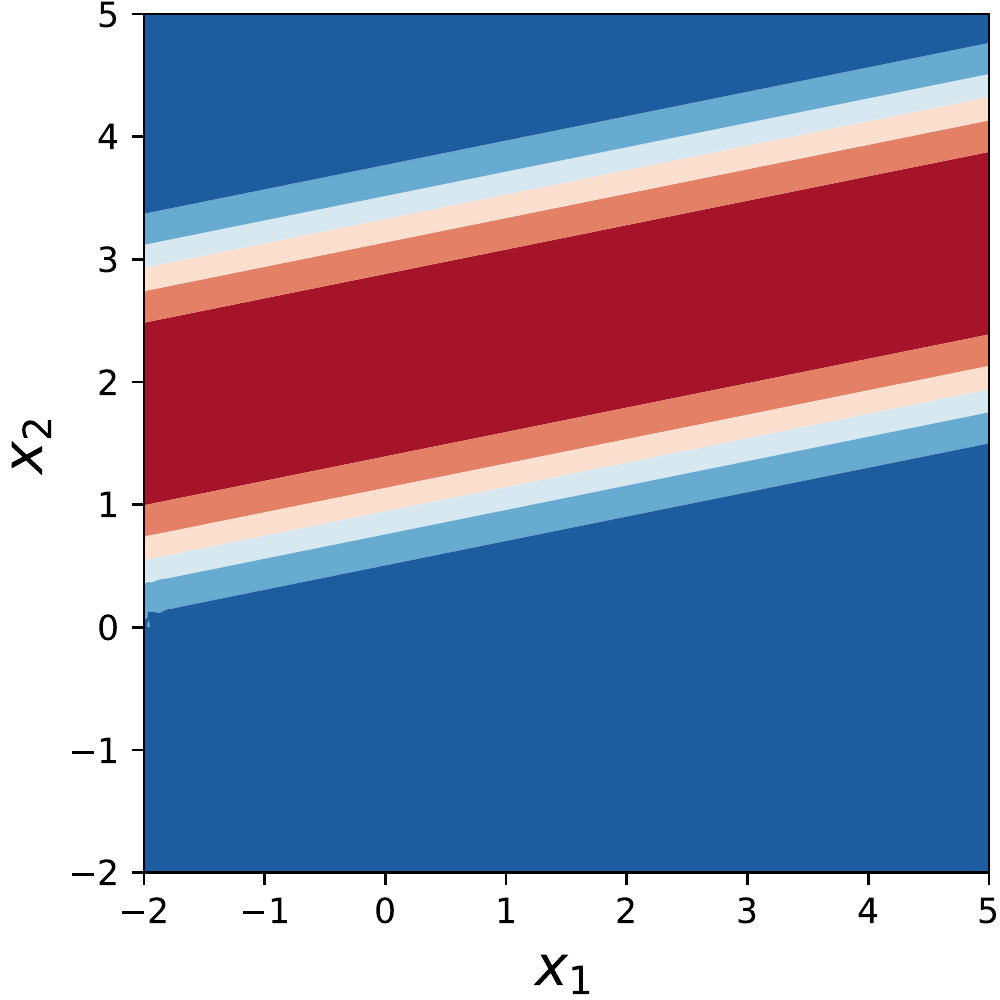} \\
(a) Distribution $p_{X|Y}$ & (b) $w_x(x)$ & (c) $w_1(x)$ & (d) $w_2(x)$
\end{tabular}
\vspace{-2mm}
\caption{An illustrative example of orthogonal classifier in a linear case. (a) is the data distributions in two classes; (b,c,d) are the probabilities of the data from class $1$ predicted by the full/principal/orthogonal classifiers. Red and blue colors mean a probability close to $1$ or $0$. The white color indicates regions with a probability close to $0.5$, which are classifiers' decision boundaries. Clearly, $w_1$ and $w_2$ have orthogonal decision boundaries.} \label{img:linear}

\end{figure*}

\section{Notations and Definition}
\textbf{Symbols.} We use the uppercase to denote random variable (\eg data $X$, label $Y$), the lowercase to denote the corresponding samples and the calligraphic letter to denote the sample spaces of r.v., \eg data sample space $\gX$. We focus on the setting where label space $\gY$ is discrete, \ie $\gY = \{1,\cdots,C\}$, and denote the $C-1$ dimensional probability simplex as $\Delta^{C}$. A classifier $w: \gX \to \Delta^C$ is a mapping from sample space to the simplex. Its $y$-th dimension $w(x)_y$ denotes the predicted probability of label $y$ for sample $x$. 

\textbf{Distributions.} For random variables $A,B$, we use the notation $p_A$, $p_{A|B}$, $p_{AB}$ to denote the marginal/conditional/joint distribution, \ie $p_A(a)=p(A=a)$, $p_{A|B}(a|b)=p(A=a|B=b)$,$p_{AB}(a,b)=p(A=a,B=b)$. Sometimes, for simplicity, we may ignore the subscript if there is no ambiguity, \eg $p(a|b)$ is an abbreviation for $p_{A|B}(a|b)$.

We begin by defining the notion of an orthogonal random variable. We consider continuous $X,Z_1,Z_2$ and assume their supports are manifolds diffeomorphic to the Euclidean space. The probability density functions~(PDF) are in $\gC^1$. 
Given a joint distribution $p_{XY}$, we define the orthogonal random variable as follows:
\begin{definition}[Orthogonal random variables] \label{def:orth} We say $Z_1$ and $Z_2$ are orthogonal random variables w.r.t $p_{XY}$ if they satisfy the following properties:
\begin{enumerate}[(i)]
    \item There exists a diffeomorphism $f:\gZ_1 \times 
    \gZ_2 \to \gX$ such that $f(Z_1,Z_2)=X$.
    \item $Z_1$ and $Z_2$ are statistically independent given $Y$, \ie $Z_1 \indep Z_2 | Y$.
\end{enumerate}
\end{definition}

The orthogonality relation is symmetric by definition. 
Note that the orthogonal pair perfectly reconstructs the observations via the diffeomorphism $f$; as random variables they are also sampled independently from class conditional distributions $p(Z_1|Y)$ and $p(Z_2|Y)$. For example, we can regard foreground objects and background scenes in natural images as being mutually orthogonal random variables. 

\paragraph{Remark.} The definition of orthogonality can be similarly developed for discrete variables and discrete-continuous mixtures. For discrete variables, for example, we can replace the requirement of diffeomorphism with bijection. 

Since the diffeomorphism $f$ is invertible, we can use $z_1:\gX\to \gZ_1$ and $z_2:\gX \to \gZ_2$ to denote the two parts of the inverse mapping so that $Z_1=z_1(X)$ and $Z_2=z_2(X)$. Note that, for a given joint distribution $p_{XY}$, the decomposition into  orthogonal random variables is not unique. There are multiple pairs of random variables that represent valid mutually orthogonal latents of the data. 
We can further justify our definition of orthogonality from an information theoretic perspective by showing that the choice of $z_2$ attains the maximum of the following constrained optimization problem. 

\begin{restatable}{proposition}{info}
\label{prop:info}
Suppose the orthogonal r.v. of $z_1(X)$ w.r.t $p_{XY}$ exists and is denoted as $z_2(X)$. Then $z(X) = z_2(X)$ is a maximizer of {$I(z(X);Y|z_1(X))$} subject to $I(z(X); z_1(X)|Y)=0$. 
\end{restatable}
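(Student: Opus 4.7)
The plan is to combine (i) feasibility of $z_2(X)$ in the constrained optimization with (ii) an information-theoretic upper bound on the objective that is exactly attained at $z_2(X)$. Feasibility is immediate: by Definition~\ref{def:orth}(ii), $Z_1 \indep Z_2 \mid Y$, so $I(z_2(X); z_1(X) \mid Y) = I(Z_2; Z_1 \mid Y) = 0$, and $z = z_2$ lies in the feasible set.

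For the upper bound, let $z$ be any measurable function of $X$. Since $z(X)$ is deterministic given $X$, the joint $(z(X), X)$ carries the same information as $X$, and the chain rule of mutual information gives
\begin{equation*}
I(X; Y \mid Z_1) \;=\; I(z(X); Y \mid Z_1) + I(X; Y \mid z(X), Z_1) \;\ge\; I(z(X); Y \mid Z_1),
\end{equation*}
i.e., the data-processing inequality along $Y - X - z(X)$ conditioned on $Z_1$. It then suffices to establish the identity $I(X; Y \mid Z_1) = I(Z_2; Y \mid Z_1)$ and to observe that equality holds at $z = z_2$: together these yield $I(z(X); Y \mid Z_1) \le I(Z_2; Y \mid Z_1)$ for every $z$, with $z_2$ attaining the bound.

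The identity $I(X; Y \mid Z_1) = I(Z_2; Y \mid Z_1)$ will be derived from Definition~\ref{def:orth}(i): for each fixed $z_1$, the map $f(z_1, \cdot):\gZ_2 \to f(\{z_1\}\times\gZ_2)$ is a diffeomorphism, so conditionally on $Z_1 = z_1$ the variables $X$ and $Z_2$ are related by a bijective $\gC^1$ transformation whose Jacobian depends only on $(Z_1, Z_2)$, not on $Y$. The change-of-variables formula for differential entropy then contributes the \emph{same} additive Jacobian correction to both $h(X \mid Z_1)$ and $h(X \mid Y, Z_1)$, so these corrections cancel in the difference, giving $I(X; Y \mid Z_1) = h(Z_2 \mid Z_1) - h(Z_2 \mid Y, Z_1) = I(Z_2; Y \mid Z_1)$. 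The main subtlety I expect is verifying this cancellation rigorously under the paper's regularity assumptions ($\gC^1$ densities on manifolds diffeomorphic to Euclidean space), and noting that the feasibility constraint is not actually used in the upper bound — the DPI holds for arbitrary functions of $X$ — so $z_2$ is one of possibly many maximizers, and the constraint $I(z(X); z_1(X)\mid Y)=0$ is what excludes trivial candidates such as $z(X)=X$ that carry extraneous information about $Z_1$ beyond what $Y$ explains.
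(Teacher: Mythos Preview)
Your proposal is correct and follows essentially the same approach as the paper: both combine feasibility from Definition~\ref{def:orth}(ii) with a data-processing bound that exploits the diffeomorphism in Definition~\ref{def:orth}(i). The only presentational difference is that the paper applies the chain rule to the unconditional joint, writing $I(z_1,z_2;Y)=I(z_1;Y)+I(z_2;Y\mid z_1)$ and $I(z_1,z;Y)=I(z_1;Y)+I(z;Y\mid z_1)$, then invokes DPI on $(z_1,z_2)\to(z_1,z(X))$ and cancels $I(z_1;Y)$---this sidesteps differential entropies and Jacobians entirely, since your identity $I(X;Y\mid Z_1)=I(Z_2;Y\mid Z_1)$ follows directly from invariance of mutual information under the conditional bijection $z_2\mapsto f(z_1,z_2)$ without needing the cancellation argument you sketch.
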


\newcommand{\finvone}{f^{-1}_1}
\newcommand{\finvtwo}{f^{-1}_2}



We defer the proof to Appendix~\ref{app:info-proof}. Proposition~\ref{prop:info} shows that the orthogonal random variable maximizes the additional information about the label we can obtain from $X$ while remaining conditionally independent of the principal random variable. This ensures complementary in predicting the label.



\section{Constructing the Orthogonal Classifier}
\newcommand{\wxwone}{w_{x} \setminus w_1}

Let $Z_1=z_1(X)$ and $Z_2=z_2(X)$ be mutually orthogonal random variables w.r.t $p_{XY}$. We call $Z_1$ the principal variable and $Z_2$ the orthogonal variable. In this section, we describe how we can construct the Bayes optimal classifier operating on features $Z_2$ from the Bayes optimal classifier relying on $Z_1$. We formally refer to the classifiers of interests as: (1) principal classifier $w_1(x)_y = p(Y=y|Z_1=z_1(x))$; (2) orthogonal classifier $w_2(x)_y = p(Y=y|Z_2=z_2(x))$; (3) full classifier $w_x(x)_y = p(Y=y|X=x)$. 

\subsection{Classifier orthogonalization}
Our key idea relies on the bijection between the density ratio and the Bayes optimal classifier~\citep{Sugiyama2012DensityRE}.
Specifically, the ratio of densities $p_{X|Y}(x|i)$ and $p_{X|Y}(x|j)$, assigned to an arbitrary point $x$, can be represented by the Bayes optimal classifier $w(x)_i = \Pr(Y=i|x)$ as
$
    \frac{p_{X|Y}(x|i)}{p_{X|Y}(x|j)} = \frac{p_Y(j)w(x)_i}{p_Y(i)w(x)_j}
$.
{Similar}, the principal classifier $w_1$ gives us associated density ratios of class-conditional distributions over $Z_1$. For any $i,j \in \gY$, we have $\frac{p_{Z_1|Y}(z_1(x)|i)}{p_{Z_1|Y}(z_1(x)|j)} = \frac{p_Y(j) w_1(x)_i}{p_Y(i) w_1(x)_j}$. These can be combined to obtain density ratios of class-conditional distribution $p_{Z_2|Y}$ and subsequently calculate the orthogonal classifier $w_2$. We additionally rely on the fact that the diffeomorphism $f$ permits us to change variables between $x$ and $z_1,z_2$: {$p_{X|Y}(x|i) = p_{Z_1|Y}(z_1|i)* p_{Z_2|Y}(z_2|i)*\textrm{vol} J_f(z_1,z_2)$, where $\textrm{vol}J_{f}$ is volume of the Jacobian~\citep{Berger1987DifferentialGM} of the diffeomorphism mapping.} Taken together, 
\begin{align*}
    \frac{p_{Z_2|Y}(z_2|i)}{p_{Z_2|Y}(z_2|j)} = \frac{p_{Z_1|Y}(z_1|i)*p_{Z_2|Y}(z_2|i)*\textrm{vol}J_{f}(z_1,z_2)}{p_{Z_1|Y}(z_1|j)*p_{Z_2|Y}(z_2|j)*\textrm{vol}J_{f}(z_1,z_2)} \Biggm/ \frac{p_{Z_1|Y}(z_1|i)}{p_{Z_1|Y}(z_1|j)}
    = \frac{w_x(x)_i}{w_x(x)_j}\Biggm/\frac{w_1(x)_i}{w_1(x)_j}\numberthis
\end{align*}
Note that since the diffeomorphism $f$ is shared with all classes, the Jacobian is the same for all label-conditioned distributions on $Z_1,Z_2$. Hence the Jacobian volume terms cancel each other in the above equation. We can then finally work backwards from the density ratios of $p_{Z_2|Y}$ to the orthogonal classifier: 
\begin{equation}
\label{eq:class-orth}
    \Pr(Y=i|Z_2 = z_2(x)) =  {\Pr(Y=i)\frac{w_x(x)_i}{w_1(x)_i}}\Biggm/\sum_j \left( \Pr(Y=j) \frac{w_x(x)_j}{w_1(x)_j} \right)
\end{equation}

We call this procedure \emph{classifier orthogonalization} since it adjusts the full classifier $w_x$ to be orthogonal to the principal classifier $w_1$.
The validity of this procedure requires overlapping supports of the class-conditional distributions{, which ensures the classifier outputs $w_x(x)_i,w_1(x)_i$ to remain non-zero for all $x\in \gX, i \in \gY$.}

Empirically, we usually have access to a dataset $\gD=\{(x_t,y_t)\}_{t=1}^n$ with $n$ iid samples from the joint distribution $p_{XY}$. To obtain the orthogonal classifier, we need to first train the full classifier $\hat{w}_x$ based on the dataset $\gD$. We can then follow the classifier orthogonalization to get an empirical orthogonal classifier, denoted as $w_2 = \hat{w}_x \setminus w_1$. We use symbol $\setminus$ to emphasize that the orthogonal classifier uses complementary information relative to $z_1$. Algorithm~\ref{alg:orth} summarizes the construction of the orthogonal classifier.

\begin{algorithm}[htb]
   \caption{Classifier Orthogonalization}
   \label{alg:orth}
\begin{algorithmic}
   \STATE {\bfseries Input:} principal classifier $w_1$, dataset $\gD$.
   \STATE Train an empirical full classifier $\hat{w}_x$ on $\gD$ by empirical risk minimization.
   \STATE Construct an orthogonal classifier $\hat{w}_x \setminus w_1$ via classifier orthogonalization~(\Eqref{eq:class-orth}). 
   \RETURN the empirical orthogonal classifier $\hat{w}_2$ = $\hat{w}_x \setminus w_1$
\end{algorithmic}
\end{algorithm}

\textbf{Generalization bound.} Since $w_x$ is trained on a finite dataset, we consider the generalization bound of the constructed orthogonal classifier. We denote the population risk as $R(w)=-\E_{p_{XY}(x,y)}\left[\log w(x)_y\right] $ and the empirical risk as $\hat{R}(w) =-\frac{1}{|\gD|}\sum_{(x_i,y_i)\in \gD} \log w(x_i)_{y_i} $. For a function family $\gW$ whose elements map $\gX$ to the simplex $\Delta^C$, we define $\hat{w}_x = \inf_{w_x \in \gW} \hat{R}(w_x), {w}^*_x = \inf_{w_x \in \gW} R(w_x) $. We further denote the Rademacher complexity of function family $\gW$ with sample size $|\gD|$ as ${\mathfrak{R}}_{|\gD|}(\gW)$. 
\begin{restatable}{theorem}{thmpac}
\label{thm:pac}
Assume $p_y$ is uniform distribution, $\forall w_x\in \gW$ takes values in $(m,1-m)^C$ with $m\in (0,\frac12)$, and ${1/p_{X|Y}(x|y)} \in (0,\gamma)\subset (0,+\infty)$ holds for $\forall x\in \gX,y\in \gY$. Then for any $\delta \in (0,1)$ with probability at least $1-\delta$, we have:
\begin{align*}
    |R(\hat{w}_x\setminus w_1) - R(w_x^*\setminus w_1)| \le  (1+\gamma)\left({2\mathfrak{R}}_{|\gD|}(\gW) + 2\log{\frac{1}{m}}\sqrt{\frac{2\log{\frac{1}{\delta}}}{|\gD|}}\right)
\end{align*}
\end{restatable}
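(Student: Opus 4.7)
The plan is to start with an algebraic identity that follows from the orthogonalization formula in Eq.~\ref{eq:class-orth} together with the assumption $p_Y$ uniform. Substituting the definition of $\hat{w}_x\setminus w_1$ into $R(\cdot)$ and simplifying yields
\begin{equation*}
    R(w\setminus w_1) \;=\; R(w) - R(w_1) + F(w), \qquad F(w) \;:=\; \mathbb{E}_X\!\left[\log \sum_j \frac{w(X)_j}{w_1(X)_j}\right].
\end{equation*}
Since $w_1$ is fixed, the term $R(w_1)$ is a constant that cancels in any difference, so I obtain the clean two-term decomposition
\begin{equation*}
    R(\hat{w}_x\setminus w_1) - R(w_x^*\setminus w_1) \;=\; \bigl[R(\hat{w}_x) - R(w_x^*)\bigr] + \bigl[F(\hat{w}_x) - F(w_x^*)\bigr].
\end{equation*}
This isolates the proof into two empirical-process problems that I would attack separately.

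For the first bracket, $\hat{w}_x$ is by construction the ERM for $\hat R$, and the cross-entropy loss $-\log w(x)_y$ is bounded by $\log(1/m)$ on $\mathcal{W}$ since $w(x)_y\in(m,1-m)$. A standard symmetrization plus Talagrand contraction argument bounds $\sup_{w\in\mathcal{W}}|R(w)-\hat R(w)|$ by $\mathfrak{R}_{|\gD|}(\mathcal{W})$ up to constants, and McDiarmid's inequality (with bounded differences of order $\log(1/m)/|\gD|$) handles the deviation term. Together these give, with probability $\ge 1-\delta/2$,
\begin{equation*}
    |R(\hat{w}_x) - R(w_x^*)| \;\le\; 2\mathfrak{R}_{|\gD|}(\mathcal{W}) + 2\log\tfrac{1}{m}\sqrt{\tfrac{2\log(2/\delta)}{|\gD|}}.
\end{equation*}

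For the second bracket, the key difficulty---and the place where the factor $\gamma$ enters---is that $\hat{w}_x$ does \emph{not} minimize $\hat F$, so a naive uniform-convergence argument on $F$ leaves an uncontrolled middle term $\hat F(\hat{w}_x) - \hat F(w_x^*)$. The plan here is to use the density lower bound $1/p_{X|Y}(x|y)<\gamma$ together with uniform $p_Y$ to perform an importance-reweighting that bounds the fluctuation of $F$ over $\mathcal{W}$ by $\gamma$ times the fluctuation of $R$ over $\mathcal{W}$; concretely, one shows $|F(w)-F(w')|\le\gamma\cdot|R(w)-R(w')|$-type control after passing through the bounded importance ratio between $p_X$ and $p_{X|Y=y}$. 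Combined with the ERM excess-risk bound for $R$, this yields with probability $\ge 1-\delta/2$
\begin{equation*}
    |F(\hat{w}_x) - F(w_x^*)| \;\le\; \gamma\left(2\mathfrak{R}_{|\gD|}(\mathcal{W}) + 2\log\tfrac{1}{m}\sqrt{\tfrac{2\log(2/\delta)}{|\gD|}}\right).
\end{equation*}
A union bound and addition of the two inequalities gives the claim with the $(1+\gamma)$ prefactor.

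The main obstacle I expect is precisely the second bracket. The decomposition and the first bracket are mechanical, but the second requires exploiting the density assumption in a way that converts empirical control of the cross-entropy objective into empirical control of the unrelated-looking log-partition term $F$. Without the density lower bound one would have no mechanism to relate these two quantities at all: the bound $1/p_{X|Y}<\gamma$ is what makes the importance weight finite and is essentially the only place in the proof where the distributional assumption on $p_{XY}$ is used substantively.
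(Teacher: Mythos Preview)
Your decomposition and two-step strategy match the paper's proof essentially line for line: the paper writes $R(w\setminus w_1)=R(w)+\mathbb{E}_{p(x,y)}\bigl[\log w_1(x)_y\sum_{y'}w(x)_{y'}/w_1(x)_{y'}\bigr]$, bounds the first bracket by the standard Rademacher/uniform-convergence argument, and handles the second bracket by sandwiching the ratio of log-partition sums between $\min_{y'}r(x,y')$ and $\max_{y'}r(x,y')$ with $r(x,y)=\hat w_x(x)_y/w_x^*(x)_y$, then importance-reweighting with $\phi(x,y)=1/p(y|x)\le\gamma$ to obtain $|F(\hat w_x)-F(w_x^*)|\le\gamma\,|R(\hat w_x)-R(w_x^*)|$. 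One refinement: that last inequality is \emph{deterministic}, so you should not spend a separate $\delta/2$ on the second bracket---both brackets are controlled on the single event from Step~1, which is how the paper gets $\log(1/\delta)$ rather than $\log(2/\delta)$ in the final bound.
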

Theorem~\ref{thm:pac} shows that the population risk of the empirical orthogonal classifier in Algorithm~\ref{alg:orth} would be close to the optimal risk if the maximum value of the reciprocal of class-conditioned distributions $1/p_{X|Y}(x|y)$ and the Rademacher term are small. Typically, the Rademacher complexity term satisfies ${\mathfrak{R}}_{|\mathcal{D}|}(\gW) = \mathcal{O}(|\gD|^{-\frac{1}{2}})$~\citep{Bartlett2001RademacherAG,Gao2014DropoutRC}. {We note that the empirical full classifier may fail in specific ways that are harmful for our purposes. For example, the classifier may not rely on all the key features due to simplicity bias as demonstrated by \citep{Shah2020ThePO}. There are several ways that this effect can be mitigated, including \citet{Ross2020EnsemblesOL, Teney2021EvadingTS}.}

\subsection{Alternative method: importance sampling}
\label{sec:is}

An alternative way to get the orthogonal classifier is via importance sampling. For each sample point $(x,y)$, we construct its importance $\phi(x, y) := \frac{p_{Z_1}(z_1(x))}{p_{Z_1|Y}(z_1(x)|y)}$.
Via Bayes' rule, the importance $\phi(x,y)$ can be calculated from the principal classifier by $\phi(x,y)= \frac{p_Y(y)}{w_1(x)_y}$. 
We define the importance sampling~(IS) objective as $\gL_{\texttt{IS}}(w):= \E_{x,y\sim p_{XY}} [\phi(x,y) \log w(x)_y]$. It can be shown that the orthogonal classifier $w_2$ maximizes the IS objective, \ie $w_2 = \argmax \gL_{\texttt{IS}}$. 
However, the importance sampling method has an Achilles' heel: variance. A few bad samples with large weights can drastically deviate the estimation, which is problematic for mini-batch learning in practice. We provide an analysis of the variance of the IS objective. It shows the variance increases with the divergences between $Z_1$'s marginal and its label-conditioned marginals, $\{D_f(p_{Z_1}|| p_{Z_1|Y=y})\}_{y=1}^C$ even when the learned classifier $w$ is approaching the optimal classifier, \ie $w \approx w_2$. Let $\widehat{\gL^n_{\texttt{IS}}}(w):=\frac{1}{n}\sum_{t=1}^n \phi(x_t,y_t) \log w(x_t)_{y_t}$ be the empirical IS objective estimated with $n$ iid samples. Clearly, $\Var(\widehat{\gL^n_{\texttt{IS}}}(w)) = \frac{1}{n}\Var(\widehat{\gL^1_{\texttt{IS}}}(w))$. While $\Var(\widehat{\gL^1_{\texttt{IS}}}(w))$ at $w=w_2$ is the following, 
\begin{align*}
\Var(\widehat{\gL^1_{\texttt{IS}}}(w_2)) 
    &=\E_Y\left[\left( D_f(p_{Z_1} || p_{Z_1|Y=y}) + 1 \right) \E_{Z_2|Y=y} \log^2 p_{Y|Z_2}(y|z_2)\right]  - \gL(w_2)^2
\end{align*}
where $D_f$ is the Pearson $\chi ^{2}$-divergence. The expression indicates that the variance grows linearly with the expected divergence. In contrast, the divergences have little effect when training the empirical full classifier in classifier orthogonalization. We provide further experimental results in \secref{sec:exp-is} to demonstrate that classifier orthogonalization has better stability than the IS objective with increasing divergences and learning rates.

\section{Orthogonal Classifier for Controlled Style Transfer}

\newcommand{\Lgan}{\gL_{\texttt{GAN}}}
\newcommand{\LOgan}{\gL_{\texttt{OGAN}}}
\newcommand{\Lcyc}{\gL_{\texttt{cyc}}}
\newcommand{\GAB}{G_{AB}}
\newcommand{\DAB}{D_{AB}}
\newcommand{\wABwone}{w_{AB} \setminus w_1}
\newcommand{\GBA}{G_{BA}}
\newcommand{\DBA}{D_{BA}}
\newcommand{\JSD}{\mathcal{D}_{\texttt{JS}}}
\newcommand{\PG}{\widetilde{P}}
\newcommand{\QG}{\widetilde{Q}}

In style transfer, we have two domains $A,B$ with distributions $P_{XY},Q_{XY}$. We use binary label $Y$ to indicate the domain of data $X$~(0 for domain $A$, 1 for domain $B$). Assume $Z_1$ and $Z_2$ are mutually orthogonal variables w.r.t both $P_{XY}$ and $Q_{XY}$. The goal is to conduct controlled style transfer between the two domains.
By controlled style transfer, we mean transferring domain $A$'s data to domain $B$ only via making changes on partial latent ($Z_1$) while not touching the other latent ($Z_2$). Mathematically, we aim to transfer the latent distribution from $P_{Z_1}P_{Z_2}$ to $Q_{Z_1}P_{Z_2}$. This task cannot be achieved by traditional style transfer algorithms such as CycleGAN~\citep{Zhu2017UnpairedIT}, since they directly transfer data distributions from $P_X$ to $Q_X$, or equivalently from the perspective of latent distributions, from $P_{Z_1}P_{Z_2}$ to $Q_{Z_1}Q_{Z_2}$. Below we show that the orthogonal classifier ${w}_x\setminus w_1$ enables such controlled style transfer. 

Our strategy is to plug the orthogonal classifier into the objective of CycleGAN. The original CycleGAN objective defines a min-max game between two generators/discriminator pairs $G_{A B}/D_{AB}$ and $G_{BA}/D_{B A}$. $\GAB$ transforms the images from domain $A$ to domain $B$. We use $\PG$ to denote the generative distribution, \ie $\GAB(X) \sim \PG$ for $X \sim P$. Similarly, $\QG$ denotes the generative distribution of generator $\GBA$. The minimax game of CycleGAN is given by
\begin{align*}
\min_{\GAB, \GBA} \max_{\DAB,\DBA}&\Lgan^{AB}(\GAB,\DAB)+ \Lgan^{BA}(\GBA,\DBA) + \Lcyc(G_{A B},G_{B A})\numberthis \label{eq:cyclegan}
\end{align*}
where GAN losses $\Lgan^{AB},\Lgan^{BA}$ minimize the divergence between the generative distributions and the targets, and the cycle consistency loss $\Lcyc$ is used to regularize the generators~\citep{Zhu2017UnpairedIT}. Concretely, the GAN loss $\Lgan^{AB}$ is defined as follows,
\begin{align*}
\Lgan^{AB}(\GAB,\DAB):=\E_{x\sim Q}[\log  \DAB(x)]+\E_{x\sim \PG}[\log (1-\DAB(x))]
\end{align*}
Now we tilt the GAN losses in CycleGAN objective to guide the generators to perform controlled transfer. Specially, we replace $\Lgan^{AB}$ in \Eqref{eq:cyclegan} with the orthogonal GAN loss $\LOgan^{AB}$ when updating the generator $\GAB$:
\begin{align*}
\LOgan^{AB}(\GAB,\DAB):=\E_{x\sim \PG}[\log (1 - \phi(\DAB(x),r(x)))] 
\end{align*}
where $\phi(\DAB(x),r(x)) = \frac{\DAB(x)r(x)}{(1 - \DAB(x)) + \DAB(x) r(x)}$ and $r(x) = \frac{w_x\setminus w_1(x)_0}{w_x\setminus w_1(x)_1}$. Consider an extreme case where we allow the model to change all latents, \ie $z_1(x)=x$. As a result, $\LOgan$ degenerates to the original $\Lgan$ since $w_x\setminus w_1 \equiv \frac12$ and $\phi(\DAB(x),r(x))\equiv \DAB(x)$. The other orthogonal GAN loss $\LOgan^{BA}$ can be similarly derived. For a given generator $\GAB$, the discriminator's optimum is achieved at $\DAB^*(x) = {Q(x)}/({\PG(x) + Q(x)})$. Assuming the discriminator is always trained to be optimal, the generator $\GAB$ can be viewed as optimizing the following virtual objective: $
\LOgan^{AB}(\GAB):=\LOgan^{AB}(\GAB,\DAB^*) = \E_{x\sim \PG}\log \frac{\PG(x)}{\PG(x)+ Q(x)r(x)}
$. The optimal generator in the new objective satisfies the following property. 
 

\begin{restatable}{proposition}{style}
\label{prop:style}
The global minimum of $\LOgan^{AB}(\GAB)$ is achieved if and only if $\PG_{Z_1,Z_2}(z_1,z_2) = Q_{Z_1}(z_1)P_{Z_2}(z_2)$.
\end{restatable}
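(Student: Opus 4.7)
The plan is threefold: (i) substitute the optimal discriminator $\DAB^*$ into $\LOgan^{AB}(\GAB)$; (ii) simplify $Q(x)\,r(x)$ into a canonical density by combining the classifier orthogonalization formula (\Eqref{eq:class-orth}) with the diffeomorphism $f$; and (iii) close the argument with Jensen's inequality, in the spirit of the original GAN analysis.

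First, plugging $\DAB^*(x) = Q(x)/(\PG(x)+Q(x))$ into $\phi$ gives $\phi(\DAB^*(x),r(x)) = Q(x)r(x)/(\PG(x)+Q(x)r(x))$, so that
\begin{equation*}
\LOgan^{AB}(\GAB) \;=\; \E_{x\sim\PG}\log\frac{\PG(x)}{\PG(x)+Q(x)\,r(x)}.
\end{equation*}
This reduces the virtual objective to a standard generator loss against the ``effective target'' $Q(x)\,r(x)$.

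The crux is identifying $Q(x)\,r(x)$ explicitly. With uniform domain prior, Bayes-optimality gives $w_x(x)_0/w_x(x)_1 = P(x)/Q(x)$ and $w_1(x)_0/w_1(x)_1 = P_{Z_1}(z_1(x))/Q_{Z_1}(z_1(x))$, so \Eqref{eq:class-orth} yields
\begin{equation*}
r(x) \;=\; \frac{(w_x\setminus w_1)(x)_0}{(w_x\setminus w_1)(x)_1} \;=\; \frac{w_x(x)_0/w_1(x)_0}{w_x(x)_1/w_1(x)_1} \;=\; \frac{P(x)\,Q_{Z_1}(z_1(x))}{Q(x)\,P_{Z_1}(z_1(x))}.
\end{equation*}
Because $Z_1\indep Z_2$ in both domains and $f$ is shared, the change-of-variables identity gives $P(x) = P_{Z_1}(z_1)P_{Z_2}(z_2)\,\textrm{vol}J_f$ and $Q(x) = Q_{Z_1}(z_1)Q_{Z_2}(z_2)\,\textrm{vol}J_f$. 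Substituting, the $P_{Z_1}$, $Q_{Z_1}$, and Jacobian factors cancel and $r(x) = P_{Z_2}(z_2(x))/Q_{Z_2}(z_2(x))$, so
\begin{equation*}
Q(x)\,r(x) \;=\; Q_{Z_1}(z_1(x))\,P_{Z_2}(z_2(x))\,\textrm{vol}J_f(z_1(x),z_2(x)) \;=:\; \bar{Q}(x),
\end{equation*}
which is precisely the density of $X=f(Z_1,Z_2)$ when $Z_1\sim Q_{Z_1}$ and $Z_2\sim P_{Z_2}$ independently.

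The final step applies Jensen's inequality to the concave function $\log$:
\begin{equation*}
-\LOgan^{AB}(\GAB) \;=\; \E_{x\sim\PG}\log\!\left(1+\frac{\bar{Q}(x)}{\PG(x)}\right) \;\le\; \log\E_{x\sim\PG}\!\left[1+\frac{\bar{Q}(x)}{\PG(x)}\right] \;=\; \log 2,
\end{equation*}
so $\LOgan^{AB}(\GAB)\ge -\log 2$, with equality iff $\bar{Q}(x)/\PG(x)$ is constant almost surely, \ie $\PG=\bar{Q}$. By the bijectivity of $f$, this is equivalent to $\PG_{Z_1,Z_2}(z_1,z_2)=Q_{Z_1}(z_1)P_{Z_2}(z_2)$, as claimed. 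The main obstacle is the identification $Q(x)\,r(x)=\bar{Q}(x)$ in step~(ii): this is what turns the tilted GAN loss into a genuine divergence on $(Z_1,Z_2)$-space, and it crucially relies on the shared Jacobian of $f$ across the two domains together with conditional independence of $(Z_1,Z_2)$. Once this identity is in hand, the remaining Jensen step is a standard Goodfellow-style argument.
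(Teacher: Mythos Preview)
Your proof is correct and follows essentially the same approach as the paper's: identify $r(x)=P_{Z_2}(z_2(x))/Q_{Z_2}(z_2(x))$ via the orthogonal-classifier density ratio, then apply Jensen's inequality to the resulting expression to obtain the lower bound $-\log 2$ with equality iff $\PG_{Z_1,Z_2}=Q_{Z_1}P_{Z_2}$. The only cosmetic difference is that you package $Q(x)r(x)$ as an auxiliary density $\bar{Q}$ in $x$-space (and derive $r$ more explicitly via $w_x$ and $w_1$), whereas the paper passes directly to $(z_1,z_2)$-coordinates and invokes $r=P_{Z_2}/Q_{Z_2}$ by definition; the Jensen step and equality analysis are identical.
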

We defer the proof to Appendix~\ref{app:style}. The proposition states that the new objective $\LOgan^{AB}$ converts the original global minimum $Q_{Z_1}Q_{Z_2}$ in $\Lgan^{AB}$ to the desired one $Q_{Z_1}P_{Z_2}$. The symmetric statement holds for $\LOgan^{BA}(\GBA)$.

\subsection{Experiments}

\begin{figure*}[t]
    \centering
      \subfigure[]{\includegraphics[width=0.325\linewidth]{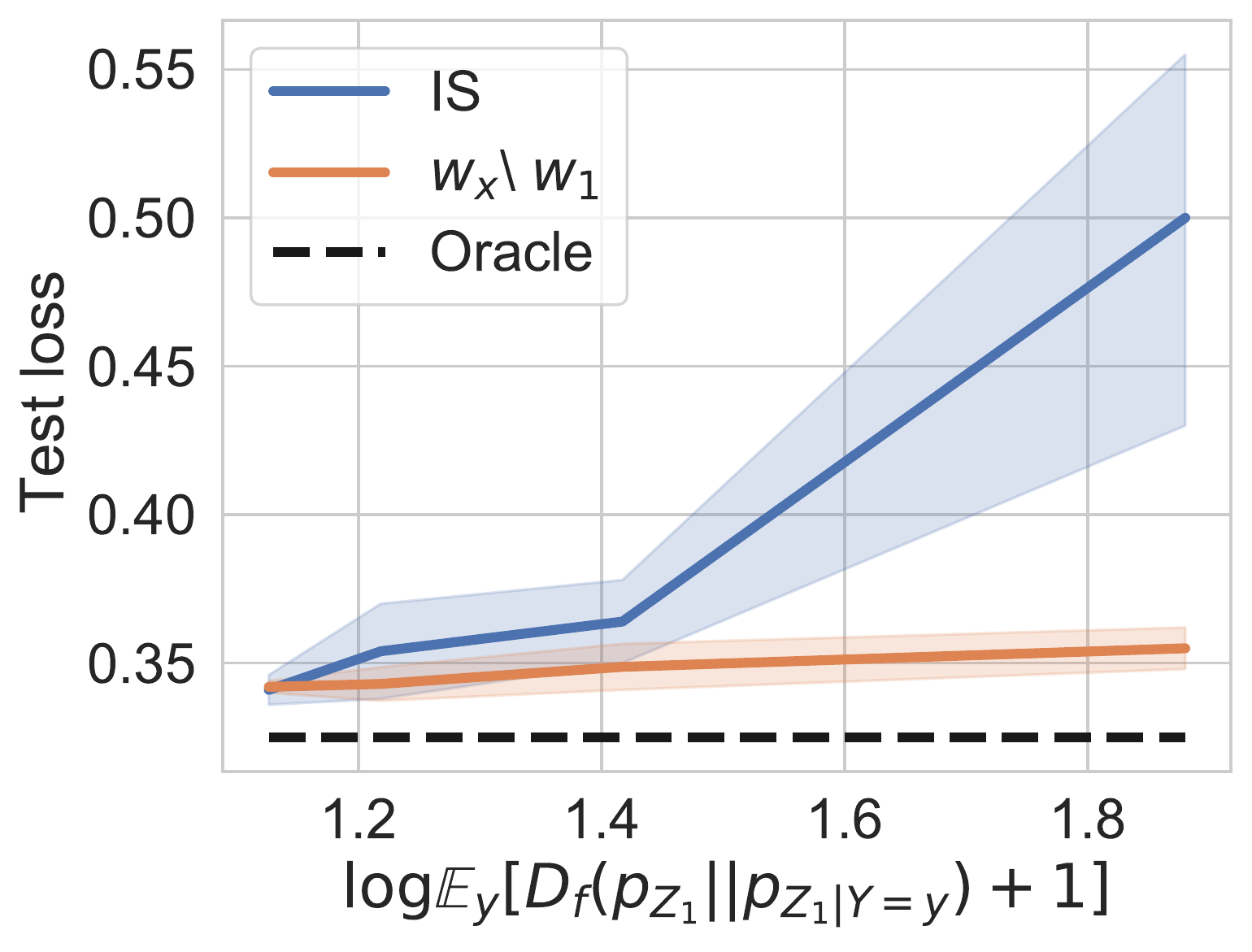}\label{img:var-div}}%
      \subfigure[]{\includegraphics[width=0.33\linewidth]{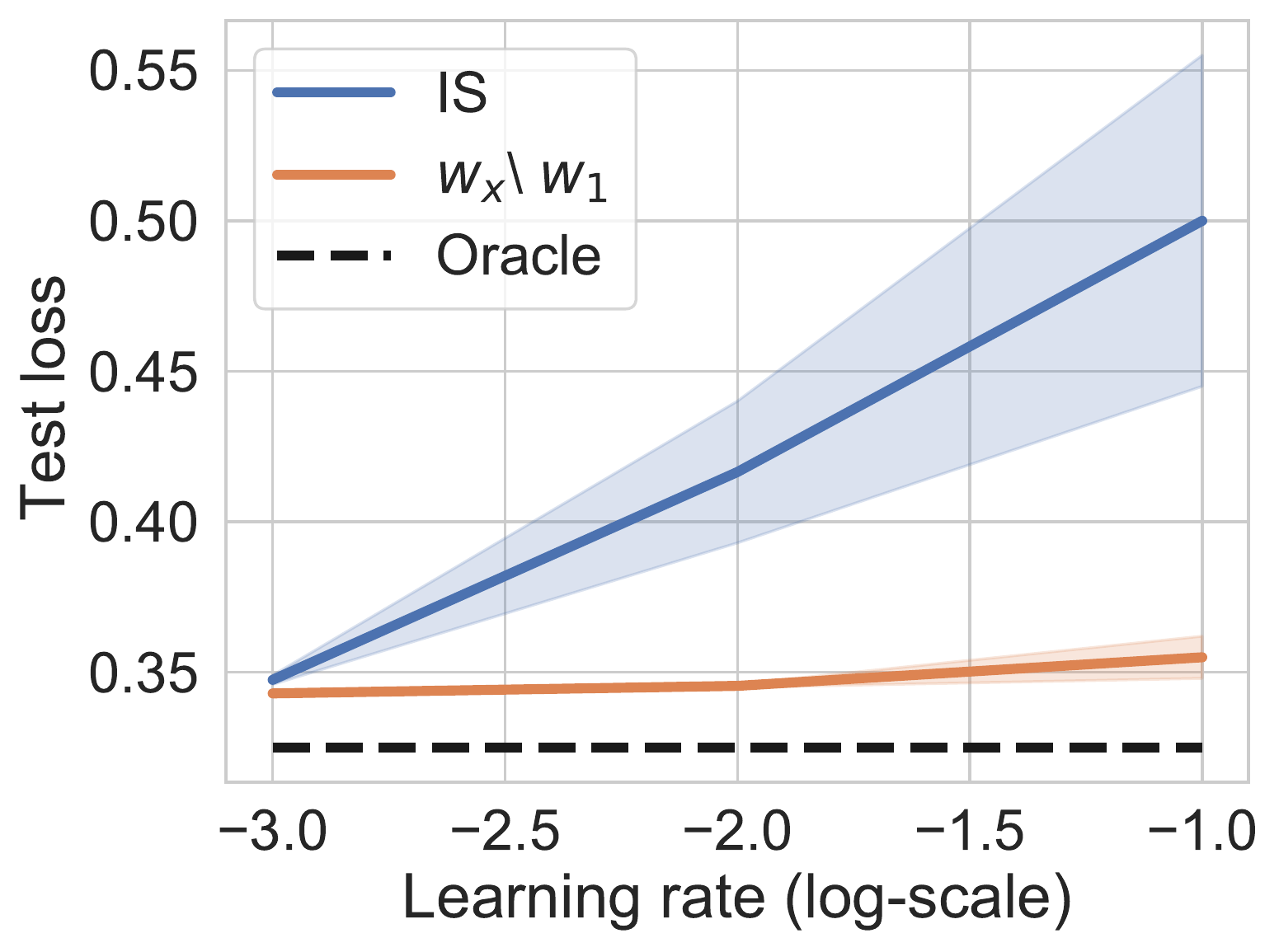}\label{img:var-lr}}%
     \subfigure[]{\includegraphics[width=0.33\linewidth]{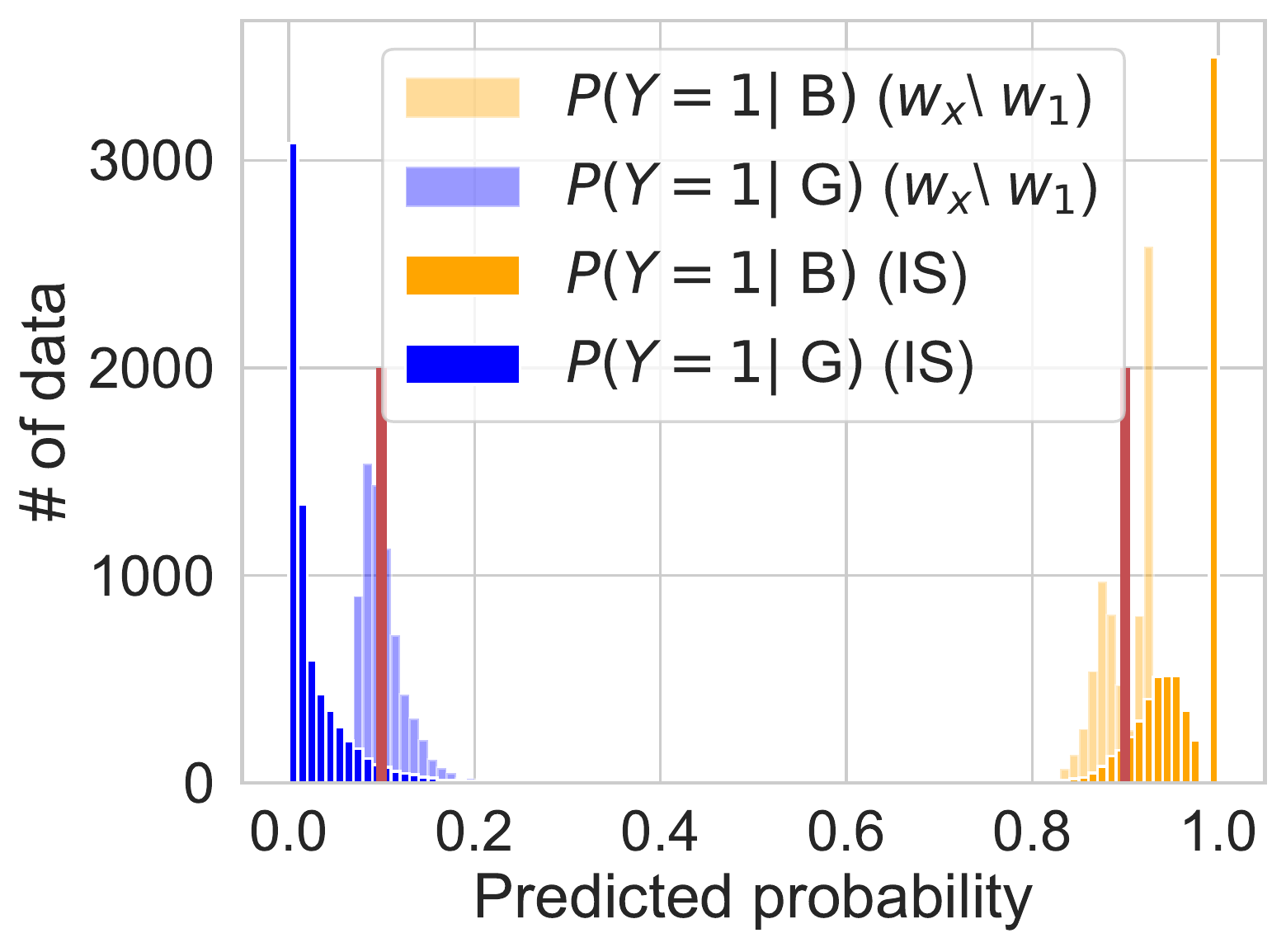}\label{img:pre-hist}}%
    \vspace{-5pt}
    \caption{\textbf{(a,b):} Test loss versus log expected divergence and learning rate. \textbf{(c):} The histogram of predicted probability of different methods. The two red lines are the ground-truth probabilities for $P(Y=1|\text{Brown background})$ and $P(Y=1|\text{Green background})$.}
    \label{img:analysis}
\end{figure*}

\begin{table*}[]
\parbox{.45\linewidth}{
    \small
    \centering
    \caption{CMNIST}
    \begin{tabular}{l c c}
    \toprule
         & $Z_1$ acc. & $Z_2$ acc.  \\
         \midrule
          Vanilla &90.2 & 14.3 \\
         $w_x\setminus w_1$~({MLDG}) & {94.9} & {91.1} \\
         $w_x\setminus w_1$~({TRM}) & {89.2}& {96.2}  \\
         $w_x\setminus w_1$~({Fish}) & {93.9}& {98.0}  \\
        \midrule
        IS~({Oracle}) & 90.0& 100\\
         $w_x\setminus w_1$~({Oracle}) & 94.9& 100 \\
         \bottomrule
    \end{tabular}
    \label{tab:mnistc}
    }
    \parbox{.5\linewidth}{
    \small
    \centering
    \caption{CelebA-GH}
    \begin{tabular}{l c c c}
    \toprule
         & $Z_1$ acc. & $Z_2$ acc.  & FID $\downarrow$ \\
         \midrule
         Vanilla & 88.4 &16.8  & \textbf{38.5}\\
         $w_x\setminus w_1$~({MLDG}) & 90.9 & 53.0& 46.2 \\
       $w_x\setminus w_1$~({TRM}) & {92.2} & {56.5} & 39.8\\
          $w_x\setminus w_1$~({Fish}) & {88.8} & {42.9} & 43.4\\
         \midrule
          IS~({Oracle}) & 89.1 & 40.6 & 44.3\\
         $w_x\setminus w_1$~({Oracle}) & \textbf{93.7} & \textbf{57.4} & 39.7\\
         \bottomrule
    \end{tabular}
        \label{tab:celebagh}
    }
\end{table*}

\label{sec:exp}

\subsubsection{Setup}
\textbf{Datasets.} (a) \textit{CMNIST}: We construct C(olors)MNIST dataset based on MNIST digits~\citep{LeCun2005TheMD}. CMNIST has two domains and 60000 images with size $~(3,28,28)$. 
The majority of the images in each domain will have a digit color and a background color correspond to the domain: domain A $\leftrightarrow$ ``red digit/green background” and domain B $\leftrightarrow$ ``blue digit/brown background".
We define two \textit{bias degree}s $\lambda_d$, $\lambda_b$ to control the probability of the images having domain-associated colors, \eg for an image in domain A, with $\lambda_d$ probability, its digit is set to be red; otherwise, its digit color is randomly red or blue. The background colors are determined similarly with parameter $\lambda_b$. In our experiments, we set $\lambda_d=0.9$ and $\lambda_b=0.8$.
Our goal is to transfer the digit color~($Z_1$) while leaving the background color~($Z_2$) invariant. (b) \textit{CelebA-GH}: We construct the CelebA-G(ender)H(air) dataset based on the gender and hair color attributes in CelebA~\citep{Liu2015DeepLF}. CelebA-GH consists of 110919 images resized to $(3,128,128)$. In CelebA-GH, domain A is non-blond-haired males, and the domain B is blond-haired females. Our goal is to transfer the facial characteristic of gender~($Z_1$) while keeping the hair color~($Z_2$) intact.

\textbf{Models.} We compare variants of orthogonal classifiers to the vanilla CycleGAN~(\textbf{Vanilla}) and the importance sampling objective~(\textbf{IS}). We consider two ways of obtaining the $w_1$ classifier: (a) \textit{Oracle:} The principal classifier $w_1$ is trained on an oracle dataset where $Z_1$ is the only discriminative direction, \eg setting bias degrees $\lambda_d=0.9$ and $\lambda_b=0$ in CMNIST such that only digit colors vary across domains. (b) \textit{Domain generalization:} Domain generalization algorithms aim to learn the prediction mechanism that is invariant across environments and thus generalizable to unseen environments~\citep{Blanchard2011GeneralizingFS,Arjovsky2019InvariantRM}. The variability of environments is considered as nuisance variation. We construct environments such that only $Y|Z_1$ is invariant. We defer details of the environments to Appendix~\ref{app:exp-style}. We use two representative domain generalization algorithms, Fish~\citep{Shi2021GradientMF}, TRM~\citep{Xu2021LearningRT} and MLDG~\citep{Li2018LearningTG}, to obtain $w_1$. We indicate different approaches by the parentheses after $\wxwone$ and IS, \eg $\wxwone$~(Oracle) is the orthogonal classifier with $w_1$ trained on Oracle datasets.

\textbf{Metric.} We use three metrics for quantitative evaluation: 1) $Z_1$ accuracy: the success rate of transferring an image's latent $z_1$ from domain A to domain B; 2) $Z_2$ accuracy: percentage of transferred images whose latents $z_2$ are unchanged; 3) FID scores: a standard metric of image quality~\citep{Heusel2017GANsTB}. We only report FID scores on the CelebA-GH dataset since it is not common to compute FID on MNIST images. 
$Z_1, Z_2$ accuracies are measured by two oracle classifiers that output an image's latents $z_1$ and $z_2$~(Appendix~\ref{app:metric-style}). 

For more details of datasets, models and training procedure, please refer to Appendix~\ref{app:exp}. 

\subsubsection{Results}
\label{sec:exp-is}
\textbf{Comparison to IS.} 
In section~\ref{sec:is}, we demonstrate that IS suffers from high variance when the divergences of the marginal and the label-conditioned latent distributions are large. We provide further empirical evidence that our classifier orthogonalization method is more robust than IS. 
\Figref{img:var-div} shows that the test loss of IS increases dramatically with the divergences. \Figref{img:var-lr} displays that IS's test loss grows rapidly when enlarging learning rate, which corroborates the observation that gradient variances are more detrimental to the model's generalization with larger learning rates~\citep{Wang2013VarianceRF}. 
In contrast, the test loss of $w_x\setminus w_1$ remains stable with varying divergences and learning rates. In \Figref{img:pre-hist}, we visualize the histograms of predicted probabilities of $\wxwone$~(light color) and IS~(dark color). We observe that the predicted probabilities of $\wxwone$ better concentrates around the ground-truth probabilities~(red lines).

\textbf{Main result.}
As shown in Table~\ref{tab:mnistc} and \ref{tab:celebagh}, adding an orthogonal classifier to the vanilla CycleGAN significantly improves its $z_2$ accuracy~(from $14$ to $90+$ in CMNIST, from $16$ to $40+$ in CelebA-GH) while incurring a slight loss of the image quality. We observe that the oracle version of orthogonal classifier~($\wxwone$~(Oracle)) achieves best $z_2$ accuracy on both datasets. In addition, class orthogonalization is compatible with domain generalization algorithms when the prediction mechanism $Z_1|Y$ is invariant across collected environments.

In \Figref{fig:style}, we visualize the transferred samples from domain A. We observe that vanilla CycleGAN models change the background colors/hair colors along with digit colors/facial characteristics in CMNIST and CelebA. In contrast, orthogonal classifiers better preserve the orthogonal aspects $Z_2$ in the input images. We also provide visualizations for domain B in Appendix~\ref{app:samples}.

\begin{figure*}[htbp]
    \centering
        \subfigure[CMNIST: Domain A]{\includegraphics[width=0.48\linewidth]{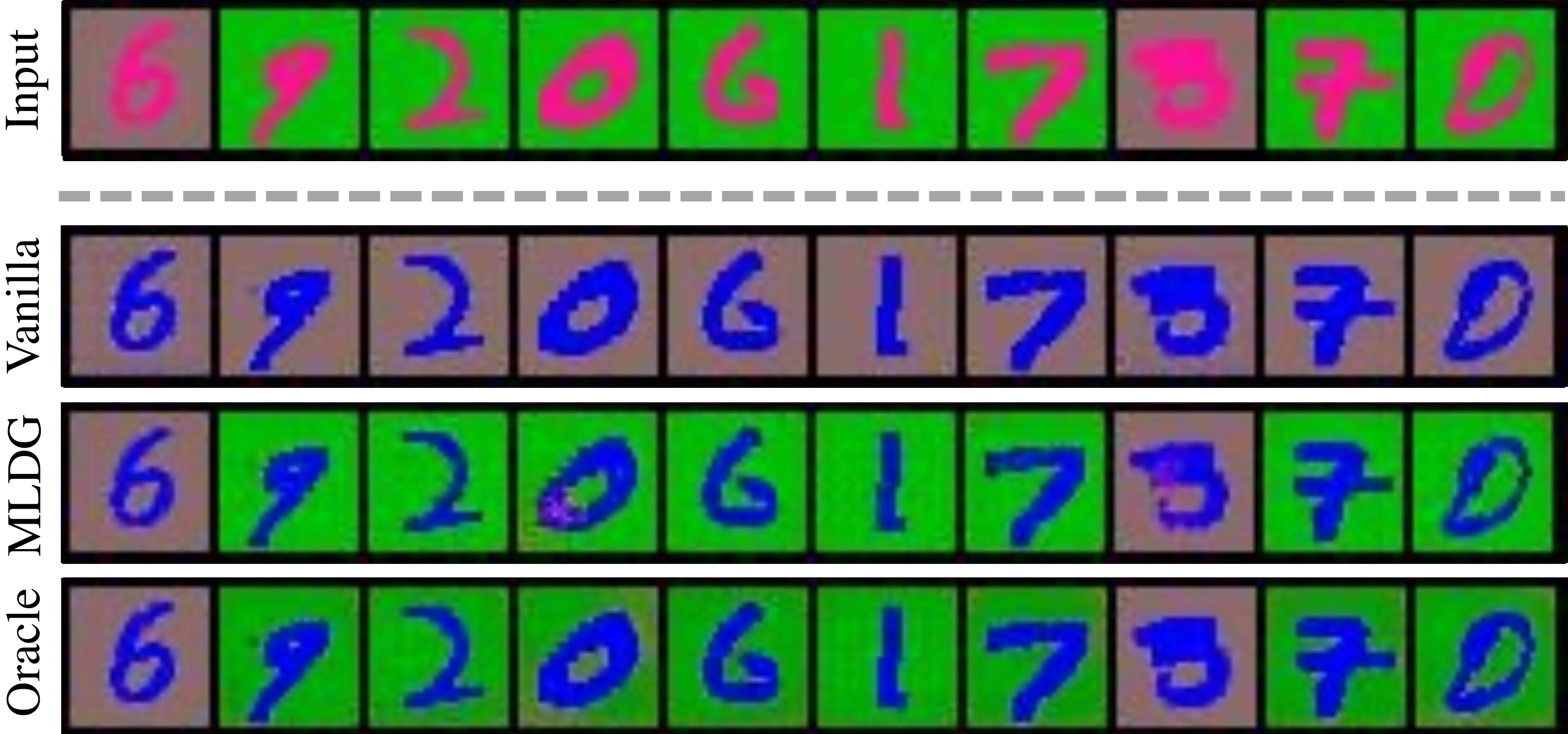}}\hfill\hspace{1pt}
        \subfigure[CelebA-GH: Domain A]{\includegraphics[width=0.48\linewidth]{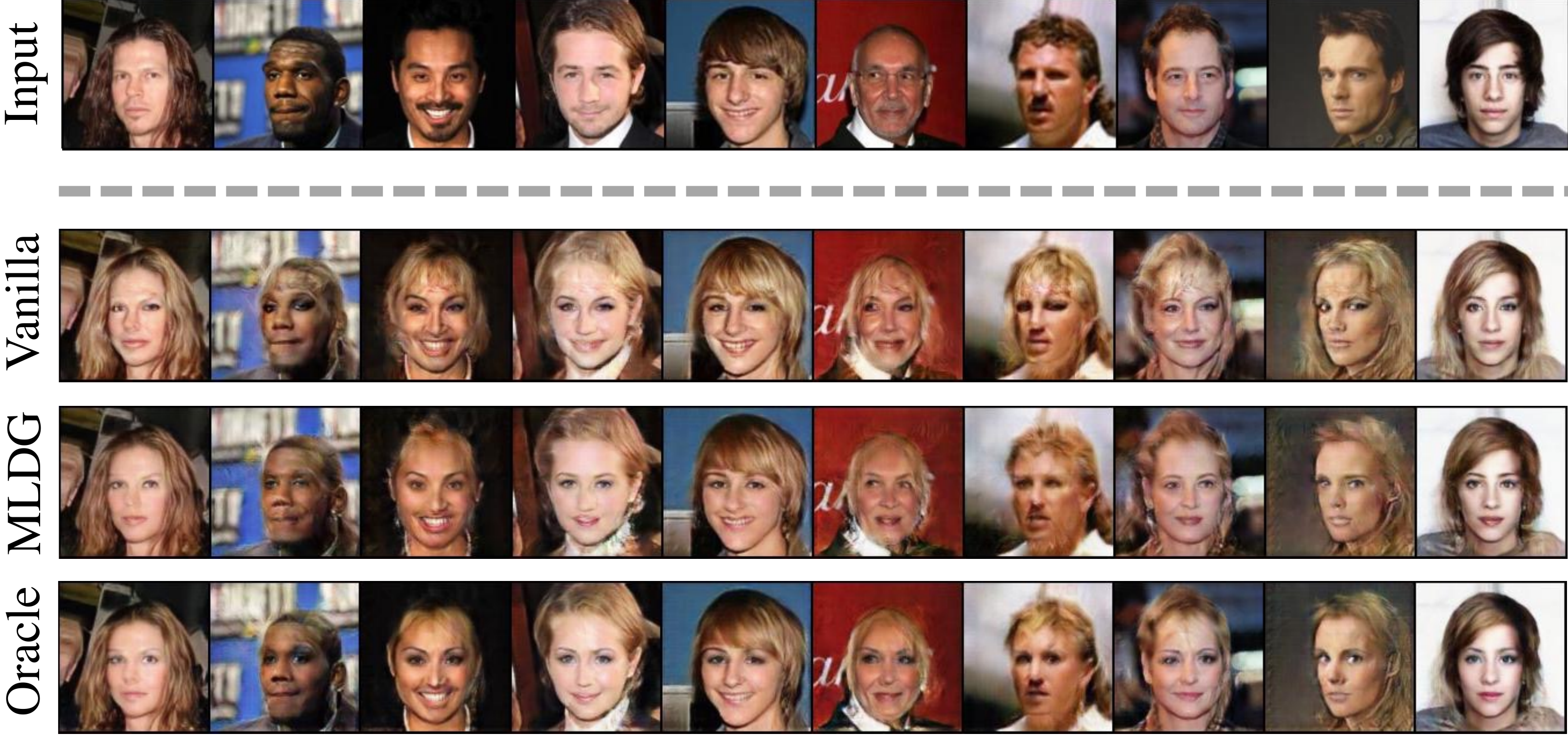}}
        \caption{Style transfer samples on the domain A of CMNIST and CelebA-GH. We visualize the inputs~(top row) and the corresponding transferred samples of different methods.}
        \label{fig:style}
\end{figure*}

\section{Orthogonal Classifier for Domain adaptation}

In unsupervised domain adaptation, we have two domains, source and target with data distribution $p_s$, $p_t$. Each sample comes with three variables, data $X$, task label $Y$ and domain label $U$. We assume $U$ is uniformly distributed in \{0~(\textrm{source}), 1~(\textrm{target})\}. The task label is only accessible in the source domain. 
Consider learning a model $h=g \circ f$ that is the composite of the encoder $f:\gX \to \gZ$ and the predictor $g:\gZ \to \gY$. A common practice of domain adaptation is to match the two domain's feature distributions $p_s(f(X)),p_t(f(X))$ via domain adversarial training~\citep{Ganin2015UnsupervisedDA,Long2018ConditionalAD,Shu2018ADA}. The encoder $f$ is optimized to extract useful feature for the predictor while simultaneously bridging the domain gap via the following domain adversarial objective:
\begin{align*}
    \min_{f} \max_{D} \gL(f,D) := \E_{x\sim p_s}[\log D(f(x))] + \E_{x\sim p_t}[\log(1- D(f(x)))]\numberthis \label{eq:minimax}
\end{align*}
where discriminator $D: \gZ \to [0,1]$ distinguishes features from two domains. The equilibrium of the objective is achieved when the encoder perfectly aligns the two domain, \ie $p_s(f(x)) = p_t(f(x))$. We highlight that such equilibrium is not desirable when the target domain has shifted label distribution~\citep{Azizzadenesheli2019RegularizedLF,Combes2020DomainAW}. Instead, when the label shift appears, it is more preferred to align the conditioned feature distribution,\ie $p_s(f(x)|y)=p_t(f(x)|y), \forall y \in \gY$.


Now we show that our classifier orthogonalization technique can be applied to the discriminator for making it ``orthogonal to'' the task label. Specifically, consider the original discriminator as full classifier $w_x$, \ie for a given $f$, $w_x(x)_0 = \argmax_{D} \gL(f,D) = \frac{p_s(f(x))}{p_s(f(x))+p_t(f(x))}$. We then construct the principle classifier $w_1$ that discriminates the domain purely via the task label $Y$, \ie $w_1(x)_0 = \Pr(U=0|Y=y(x))$. Note that here we assume the task label $Y$ can be determined by the data $X$, \ie $Y=y(X)$. We focus on the case that $Y$ is discrete so $w_1$ could be directly computed via frequency count. We propose to train the encoder $f$ with the orthogonalized discriminator $w_x \setminus w_1$,
\begin{align*}
    \min\strut_{f} \gL(f, (w_x \setminus w_1)(\cdot)_0 ) = \E_{x\sim p_s}[\log (w_x \setminus w_1)(x)_0] + \E_{x\sim p_t}[\log(1-(w_x \setminus w_1)(x)_0)]\numberthis \label{eq:orth-da}
\end{align*}
\begin{restatable}{proposition}{stationary}
\label{prop:stationary}
Suppose there exists random variable $Z_2=z_2(X)$ orthogonal to $y(X)=Y$ w.r.t $p(f(X), U)$. Then $f$ achieves global optimum if and only if it aligns all label-conditioned feature distributions, \ie $p_s(f(x)|y)=p_t(f(x)|y), \forall y \in \gY$.
\end{restatable}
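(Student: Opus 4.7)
The plan is to reduce the optimization over $f$ to a standard GAN analysis on the marginal of the orthogonal feature $Z_2$, and then leverage the conditional independence built into Definition~\ref{def:orth} to translate marginal alignment into label-conditioned alignment.

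First, I would fix an encoder $f$ and note that the ``full classifier'' in this setting is the Bayes-optimal domain discriminator on the feature, $w_x(x)_0 = p_s(f(x))/(p_s(f(x)) + p_t(f(x)))$, while the principal classifier is $w_1(x)_0 = P(U{=}0 \mid Y = y(x))$. By the classifier-orthogonalization identity in \Eqref{eq:class-orth}, applied to the joint distribution of $(f(X), U)$ with principal random variable $Y$ and orthogonal random variable $Z_2$, one obtains $(w_x \setminus w_1)(x)_0 = P(U{=}0 \mid Z_2 = z_2(x))$. Substituting this into \Eqref{eq:orth-da} yields
\begin{align*}
\gL(f, (w_x \setminus w_1)(\cdot)_0) = \E_{x\sim p_s}\log P(U{=}0 \mid Z_2{=}z_2(x)) + \E_{x\sim p_t}\log\bigl(1- P(U{=}0\mid Z_2{=}z_2(x))\bigr).
\end{align*}
This is exactly the GAN loss evaluated at its Bayes-optimal discriminator on the marginals of $Z_2$ under $p_s$ and $p_t$. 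By the classical GAN argument (equivalently, rewriting the loss in terms of the Jensen--Shannon divergence between the two $Z_2$ marginals), this expression attains its global minimum if and only if $p_s(z_2) = p_t(z_2)$ for all $z_2$.

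To conclude, I would invoke the orthogonality assumption $Y \indep Z_2 \mid U$ from Definition~\ref{def:orth}: this implies $p_s(z_2 \mid y) = p_s(z_2)$ and $p_t(z_2 \mid y) = p_t(z_2)$ in each domain, so $p_s(z_2) = p_t(z_2)$ is equivalent to $p_s(z_2 \mid y) = p_t(z_2 \mid y)$ for every $y \in \gY$. Since the bijection decomposing $f(X)$ as a function of $(Y, Z_2)$ is shared across both domains (it comes from the joint structure hypothesized on $p(f(X), U)$), a change of variables then gives $p_s(f(x)\mid y) = p_t(f(x)\mid y)$ for all $y$, establishing both directions of the proposition.

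The main obstacle I expect is the careful invocation of \Eqref{eq:class-orth}: that identity was derived for a generic observation/label pair $(X,Y)$, whereas here the ``observation'' is the feature $f(X)$ and the ``label'' is the domain index $U$, with $Y$ playing the role of the principal latent. One must verify that $Y = y(X)$ can legitimately be regarded as a function of $f(X)$ so that $(Y, Z_2)$ really is a valid orthogonal decomposition of $f(X)$ in both domains simultaneously, and that the Jacobian-volume cancellation underlying \Eqref{eq:class-orth} still goes through in the mixed discrete--continuous setting flagged by the remark after Definition~\ref{def:orth}.
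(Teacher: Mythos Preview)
Your proposal is correct, but it takes a somewhat different route from the paper's proof. The paper never works with the $Z_2$ marginals explicitly: instead of invoking \Eqref{eq:class-orth} as a black box, it uses the bijection between $(Y,Z_2)$ and $f(X)$ only to deduce that the supports of $f(X)\mid Y{=}y$ are disjoint across $y$, and from this computes directly that
\[
(w_x\setminus w_1)(x)_0 \;=\; \frac{p_s(f(x)\mid y(x))}{p_s(f(x)\mid y(x))+p_t(f(x)\mid y(x))}.
\]
It then splits the expectation by $y$ and applies Jensen's inequality to each label-conditioned term, obtaining the bound $-2\log 2$ with equality iff $p_s(f(x)\mid y)=p_t(f(x)\mid y)$ for all $y$. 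Your route---express $(w_x\setminus w_1)(x)_0$ as $P(U{=}0\mid Z_2)$, reduce to the JSD between the $Z_2$ marginals, and then use $Y\indep Z_2\mid U$ plus the shared bijection to translate marginal alignment into label-conditioned feature alignment---is a clean modular alternative that reuses \Eqref{eq:class-orth} rather than re-deriving it. The paper's approach is slightly more self-contained (it sidesteps the mixed discrete--continuous Jacobian issue you flag, since the disjoint-support observation and the ensuing ratio computation need only the bijection, not a change-of-variables formula), while yours makes the role of the orthogonal variable $Z_2$ more transparent.
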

Note that in practice, we have no access to the target domain label prior $p_t(Y)$ and the label $y(x)$ for target domain data $x \sim p_t$. Thus we use the pseudo label $\hat{y}$ as a surrogate to construct the principle classifier, where $\hat{y}(x)=\argmax_{y} h(x)_y$ is generated by our model $h$.

\subsection{Experiments}

\textbf{Models.} We take a well-known
domain adaptation algorithm \textbf{VADA}~\citep{Shu2018ADA} as our baseline.
VADA is based on domain adversarial training and combined with virtual adversarial training and entropy regularization. We show that utilizing orthogonal classifier can improve its robustness to label shift. We compare it with two improvements: (1) \textbf{VADA+$w_x\setminus w_1$} which is our method that plugs in the orthogonal classifier as a discriminator in VADA; (2) \textbf{VADA+IW}: We tailor the SOTA method for label shifted domain adaptation, importance-weighted domain adaptation~\citep{Combes2020DomainAW}, and apply it to VADA. We also incorporate two recent domain adaptation algorithms for images--- \textbf{CyCADA}~\citep{Hoffman2018CyCADACA} and \textbf{ADDA}~\citep{Tzeng2017AdversarialDD}---into comparisons. 

\textbf{Setup.} We focus on visual domain adaptation and directly borrow the datasets, architectures and domain setups from \citet{Shu2018ADA}. To add label shift between domains, we control the label distribution in the two domains. For source domain, we sub-sample $70\%$ data points from the first half of the classes and $30\%$ from the second half. We reverse the sub-sampling ratios on the target domain. {The label distribution remains the same across the target domain train and test set.} Please see Appendix~\ref{app:exp-da} for more details. 

\textbf{Results.} Table~\ref{table:da} reports the test accuracy on seven domain adaptation tasks. We observe that VADA+$w_x\setminus w_1$ improves over VADA across all tasks and outperforms VADA+IW on five out of seven tasks. We find VADA+IW performs worse than VADA in two tasks, MNIST$\to$SVHN and MNIST$\to$MNIST-M. Our hypothesis is that the domain gap is large between these datasets, hindering the estimation of importance weight. Hence, VADA-IW is unable to adapt the label shift appropriately. {In addition, the results show that VADA+$w_x\setminus w_1$ outperforms ADDA on six out of seven tasks.}
\begin{table}[!h]
\small
\centering
\caption{Test accuracy on visual domain adaptation benchmarks}
\label{table:da}
\begin{tabular}{l|ccccccc}
\toprule
\multicolumn{1}{r|}{Source}  & MNIST & SVHN  & MNIST & DIGITS & SIGNS & CIFAR & STL\\
\multicolumn{1}{r|}{Target} & MNIST-M & MNIST & SVHN & SVHN  & GTSRB & STL & CIFAR\\
\midrule
Source-Only & $51.8$ & $75.7$ & $34.5$ & $85.0$ & $74.7$ & $68.7$ & $47.7$ \\
{ADDA} &$\bf{89.7}$ & $78.2$&$38.4$ &$86.0$ & $90.6$ & $66.8$& $50.4$ \\
{CyCADA} &- & $82.8$& $39.6$ &- & -& -& -\\
\midrule
VADA & $77.8$ & $79.0$ & $35.7$ & $90.3$ & $93.6$ & $72.4$  &  $53.1$\\
VADA + IW & $71.2$\textcolor{red}{$\downarrow$} & ${87.1}$\textcolor{green}{$\uparrow$} & $34.5$ \textcolor{red}{$\downarrow$}& $\bf{90.7}$ \textcolor{green}{$\uparrow$}& $\bf{95.4}$\textcolor{green}{$\uparrow$} & ${74.0}$\textcolor{green}{$\uparrow$} & $53.8$\textcolor{green}{$\uparrow$}\\
VADA + $w_x\hspace{-1pt}\setminus \hspace{-1pt}w_1$  & $\bf{79.1}$\textcolor{green}{$\uparrow$} & $\bf{88.0}$\textcolor{green}{$\uparrow$}& $\bf{40.5}$\textcolor{green}{$\uparrow$} & $90.6$\textcolor{green}{$\uparrow$} & ${95.2}$\textcolor{green}{$\uparrow$}&${\bf{74.5}}$ \textcolor{green}{$\uparrow$}& $\bf{54.1}$\textcolor{green}{$\uparrow$} \\
\midrule
\end{tabular}
\label{table:accuracy}
\end{table}
\section{Orthogonal Classifier for Fairness}

We are given a dataset $\gD = \{(x_t, y_t, u_t)\}_{t=1}^n$ that is sampled iid from the distribution $p_{XYU}$, which contains the observations $x\in \gX$, the sensitive attributes $u\in \gU$ and the labels $y \in \gY$. Our goal is to learn a classifier that is accurate w.r.t $y$ and fair w.r.t $u$. We frame the fairness problem as finding the orthogonal classifier of an ``totally unfair" classifier $w_1$ that only uses the sensitive attributes $u$ for prediction. We can directly get the unfair classifier $w_1$ from the dataset statistics, \ie $w_1(x)_y=p(Y=y|U=u(x))$. Below we show that the orthogonal classifier of unfair classifier meets equalized odds, one metric for fairness evaluation.
\begin{restatable}{proposition}{fairness}
\label{prop:fairness}
If the orthogonal random variable of $U=u(X)$ w.r.t $p_{XY}$ {exists}, then the orthogonal classifier $w_x\setminus w_1$ satisfies the criterion of equalized odds.
\end{restatable}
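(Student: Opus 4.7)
The plan is to show that the orthogonal classifier's output factors through the orthogonal variable $Z_2$, which is conditionally independent of the sensitive attribute $U$ given the label $Y$; equalized odds $\hat{Y} \indep U \mid Y$ then drops out immediately.

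First I would recall that equalized odds requires $\hat{Y} \indep U \mid Y$, where $\hat{Y}$ is the prediction produced by the classifier (either by argmax or by a random draw from its simplex output). By hypothesis the orthogonal random variable of $U=u(X)$ w.r.t.\ $p_{XY}$ exists; call it $Z_2 = z_2(X)$. Condition (ii) of Definition~\ref{def:orth} (in its discrete-continuous mixed form from the Remark) then yields $Z_2 \indep U \mid Y$.

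Next I would apply the classifier orthogonalization identity~\eqref{eq:class-orth} with the principal variable taken to be $U$ itself, so that $w_1(x)_y = \Pr(Y=y \mid U = u(x))$ is exactly the ``totally unfair'' classifier in the statement. The identity gives
\[
(w_x \setminus w_1)(x)_y \;=\; \Pr(Y = y \mid Z_2 = z_2(x)),
\]
so the orthogonal classifier depends on $x$ only through $z_2(x)$. Consequently any prediction rule derived from $w_x \setminus w_1$ can be written as $\hat{Y} = g(Z_2, \xi)$ for some function $g$ and auxiliary randomness $\xi$ independent of $(X,U,Y)$. Combining this with $Z_2 \indep U \mid Y$ gives the desired $\hat{Y} \indep U \mid Y$.

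The only delicate point I foresee is justifying that~\eqref{eq:class-orth} applies here, since $U$ is a discrete sensitive attribute while the derivation in the text assumed a diffeomorphism between $X$ and $(Z_1,Z_2)$. I would handle this by invoking the discrete analogue of Definition~\ref{def:orth} from the Remark: replacing the diffeomorphism $f$ with a bijection on the discrete factor trivializes the Jacobian-volume terms (they reduce to $1$ and cancel), so the derivation leading to~\eqref{eq:class-orth} goes through verbatim. Once that is in place, the factorization $\hat{Y} \leftarrow Z_2 \indep U \mid Y$ immediately yields equalized odds.
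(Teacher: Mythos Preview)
Your proposal is correct and mirrors the paper's own proof almost exactly: both identify $w_x\setminus w_1(x)=\Pr(Y=y\mid z_2(x))$, invoke $Z_2\indep U\mid Y$ from Definition~\ref{def:orth}(ii), and conclude equalized odds from the fact that the classifier factors through $Z_2$. Your write-up is in fact more thorough than the paper's, which leaves the discrete-$U$ issue and the passage from ``classifier output depends only on $Z_2$'' to ``$\hat Y\indep U\mid Y$'' implicit.
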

We emphasize that, unlike existing algorithms for learning fairness classifier, our method does not require additional training. We obtain a fair classifier via orthogonalizing an existing model $w_x$ which is simply the vanilla model trained by ERM on the dataset $\gD$.

\subsection{Experiments}

\textbf{Setup.} We experiment on the UCI Adult dataset, which has gender as the sensitive attribute and the UCI German credit dataset, which has age as the sensitive attribute~\citep{Zemel2013LearningFR,Madras2018LearningAF,Song2019LearningCF}. We compare the orthogonal classifier $w_x\setminus w_1$ to three baselines: \textbf{LAFTR}~\citep{Madras2018LearningAF}, which proposes adversarial objective functions that upper bounds the unfairness metrics;  \textbf{L-MIFR}~\citep{Song2019LearningCF}, which uses mutual information objectives to control the expressiveness-fairness trade-off; {\textbf{ReBias}~\citep{Bahng2020LearningDR}, which minimizes the Hilbert-Schmidt Independence Criterion between the model representations and biased representations}; as well as the \textbf{Vanilla} ($w_x$) trained by ERM. We employ two fairness metrics -- demographic parity distance~($\Delta_{\textrm{DP}}$) and equalized odds distance~($\Delta_{\textrm{EO}}$) -- defined in \citet{Madras2018LearningAF}. We denote the penalty coefficient of the adversarial or de-biasing objective as $\gamma$, whose values govern a trade-off between prediction accuracy and fairness, in all baselines. We borrow experiment configurations, such as CNN architecture, from \citet{Song2019LearningCF}. Please refer to Appendix~\ref{app:exp-fair} for more details.

\newcommand{\DEO}{\Delta_{\textrm{EO}}}
\newcommand{\DDP}{\Delta_{\textrm{DP}}}

\textbf{Results.} Tables~\ref{table:adults}, \ref{table:german} report the test accuracy, $\Delta_{\textrm{DP}}$ and $\Delta_{\textrm{EO}}$ on Adults and German datasets. We observe that the orthogonal classifier decreases the unfairness degree of the Vanilla model and has competitive performances to existing baselines.
Especially in the German dataset, compared to LAFTR, our method has the same $\DEO$ but better $\DDP$ and test accuracy. It is surprising that our method outperforms LAFTR, even though it is not specially designed for fairness. Further, our method has benefit of being training-free, which allows it to be applied to improve any off-the-shelf classifier's fairness without additional training.
\begin{table*}[h]
\parbox{.5\linewidth}{
    \centering
    \caption{Accuracy v.s. Fairness (Adults)}
  \small
    \begin{tabular}{l c c c}
    \toprule
          & Acc.$\uparrow$ & $\Delta_{\textrm{DP}}$ $\downarrow$  & $\Delta_{\textrm{EO}}$ $\downarrow$ \\
         \midrule
         Vanilla & $\bf{84.5}$&$0.19$ & $0.20$ \\
         \midrule
         LAFTR ($\gamma=0.1$)& $84.2$& $0.14$ & $0.09$\\
          LAFTR ($\gamma=0.5$)& $84.0$& $0.12$ & $\bf{0.07}$\\
           L-MIFR ($\gamma=0.05$) &$81.6$ & $\bf{0.04}$ & $0.15$\\
         L-MIFR ($\gamma=0.1$) &$82.0$ & $0.06$ & $0.16$\\
         {ReBias} ($\gamma=100$)&$84.3$ &$0.15$ & $0.11$\\
         {ReBias} ($\gamma=50$)&$84.4$ &$0.17$ & $0.16$\\
                  \midrule
         $w_x\setminus w_1$ & $81.6$&$0.12$ & $0.12$\\
         \bottomrule
    \end{tabular}
    \label{table:adults}
    }\hfill
    \parbox{.5\linewidth}{
    \centering
    \small
    \caption{Accuracy v.s. Fairness (German)}
    \begin{tabular}{l c c c}
    \toprule
          & Acc.$\uparrow$ & $\Delta_{\textrm{DP}}$ $\downarrow$  & $\Delta_{\textrm{EO}}$ $\downarrow$ \\
          \midrule
         Vanilla & $\bf{76.0}$&$0.19 $ & $0.33$ \\
          \midrule
         LAFTR ($\gamma=0.1$)& $73.0$& $0.11$ & $\bf{0.17}$\\
          LAFTR ($\gamma=0.5$)& $72.7$& $0.11$ & $0.19$\\
         L-MIFR ($\gamma=0.1$) & $75.8$&  $0.10$& $0.21$\\
         L-MIFR ($\gamma=0.05$) &$75.6$ & ${0.08}$ & $0.18$ \\
         {ReBias} ($\gamma=100$) & $73.0$& $\bf{0.07}$ & $\bf{0.17}$\\
         {ReBias} ($\gamma=50$) & $75.0$& ${0.10}$ & ${0.20}$\\
                  \midrule
         $w_x\setminus w_1$ & $75.4$& $0.09$ & $0.18$\\
         \bottomrule
    \end{tabular}
            \label{table:german}
    }
\end{table*}

\section{Related Works}

\textbf{Disentangled representation learning}\hspace{1pt} Similar to orthogonal random variables, disentangled representations are also based on specific notions of feature independence. For example, \citet{Higgins2018TowardsAD} defines disentangled representations via equivalence and independence of group transformations, \citet{Kim2018DisentanglingBF} relates disentanglement to distribution factorization, and \citet{Shu2020WeaklySD} characterizes disentanglement through generator consistency and a notion of restrictiveness. {Our work differs in two key respects. First, most definitions of disentanglement rely primarily on the bijection between latents and inputs~\citep{Shu2020WeaklySD} absent labels. In contrast, our orthogonal features must be conditionally independent given labels. Further, in our work orthogonal features remain implicit and they are used discriminatively in predictors. 
Several approaches aim to learn disentangled representations in an unsupervised manner ~\citep{Chen2016InfoGANIR,Higgins2017betaVAELB,Chen2018IsolatingSO}. However, \cite{Locatello2019ChallengingCA} argues that unsupervised disentangled representation learning is impossible without a proper inductive bias.}

\textbf{Model de-biasing}\hspace{1pt}
A line of works focuses on preventing model replying on the dataset biases. \citet{Bahng2020LearningDR} learns the de-biased representation by imposing HSIC penalty with biased representation, and \citet{Nam2020LearningFF} trains an unbiased model by up-weighting the high loss samples in the biased model. \citet{Li2021ShapeTextureDN} de-bias training data through data augmentation. However, these works lack theoretical definition for the de-biased model in general cases and often require explicit dataset biases. 

\textbf{Density ratio estimation using a classifier}\hspace{1pt} Using a binary classifier for estimating the density ratio~\citep{Sugiyama2012DensityRE} enjoys widespread attention in machine learning. The density ratio estimated by classifier has been applied to Monte Carlo inference~\citep{Grover2019BiasCO,Azadi2019DiscriminatorRS}, class imbalance~\citep{Byrd2019WhatIT} and domain adaptation~\citep{Azizzadenesheli2019RegularizedLF}.

{\textbf{Learning a set of diverse classifiers}\hspace{1pt} Another line of work related to ours is learning a collection of diverse classifiers through imposing penalties that relate to input gradients. Diversity here means that the classifiers rely on different sets of features. To encourage such diversity, \citet{Ross2018LearningQD,Ross2020EnsemblesOL} propose a notion of local independence, which uses cosine similarity between input gradients of classifiers as the regularizer, while in \citet{Teney2021EvadingTS} the regularizer pertains to dot products. \citet{Ross2017RightFT} sequentially trains multiple classifiers to obtain qualitatively different decision boundaries. \citet{Littwin2016TheML} proposes orthogonal weight constraints on the final linear layer.}

{The goals of learning diverse classifiers include interpretability~\citep{Ross2017RightFT}, overcoming simplicity bias~\citep{Teney2021EvadingTS}, improving ensemble performance~\citep{Ross2020EnsemblesOL}, and recovering confounding decision rules~\citep{Ross2018LearningQD}. 
There is a direct trade-off between diversity and accuracy and this is controlled by the regularization parameter. In contrast, in our work, given the principal classifier, our method directly constructs a classifier that uses only the orthogonal variables for prediction. There is no trade-off to set in this sense. We also focus on novel applications of controlling the orthogonal directions, such as style transfer, domain adaptation and fairness.
Further, our notion of orthogonality is defined via latent orthogonal variables that relate to inputs via a diffeomorphism~(Definition \ref{def:orth}) as opposed to directly in the input space. Although learning diverse classifiers through input gradient penalties is efficient, it does not guarantee orthogonality in the sense we define it. We provide an illustrative counter-example in Appendix~\ref{app:orth-gradient}, where the input space is not disentangled nor has orthogonal input gradients but corresponds to a pair of orthogonal classifiers in our sense. We also prove that, given the principal classifier, minimizing the loss by introducing an input gradient penalty would not necessarily lead to an orthogonal classifier.}

{We note that one clear limitation of our classifier orthogonalization procedure is that we need access to the full classifier. Learning this full classifier can be impacted by simplicity bias \citep{Shah2020ThePO} that could prevent it from relying on all the relevant features. We can address this limitation by leveraging previous efforts~\citep{Ross2020EnsemblesOL,Teney2021EvadingTS} to mitigate simplicity bias when training the full classifier.}

\section{Conclusion}

We consider finding a discriminative direction that is orthogonal to a given principal classifier. The solution in the linear case is straightforward but does not generalize to the non-linear case. {We define and investigate orthogonal random variables, and propose a simple but effective algorithm~(classifier orthogonalization) to construct the orthogonal classifier with both theoretical and empirical support. Empirically, we demonstrate that the orthogonal classifier enables controlled style transfer, improves existing alignment methods for domain adaptation, and has a lower degree of unfairness.}

\subsection*{Acknowledgements}

The work was partially supported by an MIT-DSTA Singapore project and by an MIT-IBM Grand Challenge grant. YX was partially supported by the HDTV Grand Alliance Fellowship. We would like to thank the anonymous reviewers for their valuable feedback.
\clearpage
\bibliography{bib}
\bibliographystyle{iclr2022_conference}
\clearpage

\appendix
\section{More properties of orthogonal random variable}
In this section we demonstrate the properties of orthogonal random variables. We also discuss the existence and uniqueness of orthogonal classifier when given the principal classifier. Below we list some useful properties of orthogonal random variables.
\begin{restatable}{proposition}{property}\label{prop:property} Let $Z_1$ and $Z_2$ be any mutually orthogonal w.r.t $P_{XY}$, then we have
\begin{enumerate}[(i)]
\item \textbf{Invariance}: For any diffeomorphism $g$, $g(Z_1)$ and $g(Z_2)$ are also mutually orthogonal.
\item \textbf{Distribution-free}: $Z_1,Z_2$ are mutually orthogonal w.r.t $P_{X'Y}$ if and only if there exists a diffeomorphism mapping between $X$ and $X'$.
\item \textbf{Existence}: Suppose the label-conditioned distributions $p_y\in \gC^1,\textrm{supp}(p_y)=\gX, \forall y \in \gY$. For $z_1\in \gC^1$ and $z_1(\gX)$ is diffeomorphism to Euclidean space, then $\forall i\in \gY$, there exists a diffeomorphism mapping $(z_1(X),z_{2,i}(X))$ such that $z_1(X)\indep z_{2,i}(X)|Y=i$. Further if there exists diffeomorphism mappings between $z_{2,i}$s, then the orthogonal r.v. of $z_1(X)$ exists.
\end{enumerate}
\end{restatable}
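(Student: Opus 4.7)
The plan is to handle the three claims in turn; (i) and (ii) reduce to short composition arguments, while (iii) is the main technical content.

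For \textbf{(i) Invariance}, conditional independence is preserved under measurable bijections, so $g(Z_1)\indep g(Z_2)\mid Y$ is immediate from $Z_1\indep Z_2\mid Y$. For the reconstruction requirement, if $f:\gZ_1\times\gZ_2\to\gX$ witnesses orthogonality and $g_1,g_2$ denote the diffeomorphisms applied to each coordinate, the map $\tilde f:=f\circ(g_1^{-1},g_2^{-1})$ is a composition of diffeomorphisms and satisfies $\tilde f(g_1(Z_1),g_2(Z_2))=X$. For \textbf{(ii) Distribution-free}, the ``if'' direction follows by composing the witnessing $f$ with the given diffeomorphism $h:\gX\to\gX'$. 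Conversely, if $f$ and $f'$ witness orthogonality of $(Z_1,Z_2)$ w.r.t.\ $P_{XY}$ and $P_{X'Y}$ respectively, then $f'\circ f^{-1}:\gX\to\gX'$ is a diffeomorphism carrying $X$ to $X'$.

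For \textbf{(iii) Existence}, I would proceed in three substeps. First, using the hypothesis that both $\gX$ and $z_1(\gX)$ are diffeomorphic to Euclidean space, complete $z_1$ to a coordinate map, i.e.\ choose a diffeomorphism $(z_1,h):\gX\to z_1(\gX)\times\gH$ whose first component is $z_1$. Second, fix $i\in\gY$ and apply a conditional CDF (Knothe--Rosenblatt) transformation: under $p_{X\mid Y=i}$ the joint density of $(z_1(X),h(X))$ factors as $p_i(a)\,q_i(b\mid a)$; let $T_{a,i}$ be the conditional CDF transport from $q_i(\cdot\mid a)$ to a fixed reference distribution (say uniform), and define $z_{2,i}(x):=T_{z_1(x),i}(h(x))$. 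The $\gC^1$ regularity of the class-conditional densities makes each $T_{a,i}$ a diffeomorphism that depends $C^1$ on $a$, so $(z_1,z_{2,i})$ is itself a diffeomorphism of $\gX$; by construction the conditional law of $z_{2,i}(X)$ given $z_1(X)=a$ and $Y=i$ equals the reference distribution for every $a$, yielding $z_1(X)\indep z_{2,i}(X)\mid Y=i$. Third, reading the hypothesis ``there exist diffeomorphism mappings between the $z_{2,i}$'s'' as providing diffeomorphisms $\psi_i$ with $\psi_i\circ z_{2,i}=\psi_1\circ z_{2,1}$ for every $i$, set $z_2:=\psi_1\circ z_{2,1}$. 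Then $(z_1,z_2)$ is still a diffeomorphism (it differs from $(z_1,z_{2,i})$ by a diffeomorphism on the second factor), and applying (i) within each class gives $z_1(X)\indep z_2(X)\mid Y=i$ for every $i$, as required.

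The main obstacle is the very first substep of (iii): promoting a $\gC^1$ function $z_1$ whose image is Euclidean to a genuine coordinate system on all of $\gX$ is not automatic from the stated hypotheses and really requires $z_1$ to behave like a surjective submersion with diffeomorphic fibers. Once this completion is granted, the conditional Rosenblatt construction and its smooth dependence on the conditioning variable are standard, and the gluing step becomes a direct consequence of the invariance established in part (i).
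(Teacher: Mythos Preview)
Your proposal is correct and matches the paper's argument almost step for step: parts (i) and (ii) are the same composition arguments, and for (iii) the paper likewise first completes $z_1$ to a coordinate system $(z_1,t)$, then applies a coordinatewise conditional CDF (Rosenblatt) map to make $z_{2,i}$ uniform and hence independent of $z_1$ given $Y=i$, and finally glues via the assumed diffeomorphisms between the $z_{2,i}$. You are in fact a bit more careful than the paper: you supply the converse direction of (ii), which the paper omits, and you correctly flag that the coordinate-completion step in (iii) is asserted rather than justified---the paper simply states that $[z_1(\gX),\mathbb{R}^{d-k}]$ is diffeomorphic to $\gX$ with first component $z_1$, without addressing the submersion/fiber issue you raise.
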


In the following theorem, we justify the validity of the classifier orthogonalization when $z_1(x)$ satisfies some regularity conditions.
\begin{restatable}{theorem}{inv}\label{thm:inv}
Suppose $w_1(x)_i=p(Y=i|z_1(x))$. If the label-condition distributions $p_y$s, and $z_1$ satisfy the conditions in Proposition~\ref{prop:property}.(iii), then $w_x\setminus w_1$ is the unique orthogonal classifier of $w_1$.
\end{restatable}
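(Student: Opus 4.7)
The plan is to combine the existence statement of Proposition~\ref{prop:property}.(iii) with the density-ratio derivation leading to Eq.~(\ref{eq:class-orth}), and then observe that the closed-form expression never references $z_2$, which forces uniqueness for free. First, under the regularity assumed in Proposition~\ref{prop:property}.(iii), an orthogonal random variable $Z_2 = z_2(X)$ of $z_1(X)$ w.r.t $p_{XY}$ exists. By Definition~\ref{def:orth} this comes with a diffeomorphism $f : \gZ_1 \times \gZ_2 \to \gX$ satisfying $f(Z_1, Z_2) = X$ and $Z_1 \indep Z_2 \mid Y$. Define the candidate orthogonal classifier by $w_2(x)_y = p(Y = y \mid Z_2 = z_2(x))$.

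Second, I would rerun the derivation of Section~3.1 verbatim to prove $w_2 = w_x \setminus w_1$. The full-support hypothesis $\mathrm{supp}(p_y) = \gX$ ensures $w_x(x)_j > 0$ and $w_1(x)_j > 0$ at every $x$, so all ratios are well defined. Conditional independence $Z_1 \indep Z_2 \mid Y$ together with the change of variables under $f$ yields the factorization $p_{X|Y}(x|i) = p_{Z_1|Y}(z_1|i)\,p_{Z_2|Y}(z_2|i)\,\mathrm{vol}\,J_f(z_1,z_2)$; crucially the volume term is label-independent because $f$ itself does not depend on $y$, so it cancels from every class-conditional ratio. Combining with the Bayes-rule identity $p_{X|Y}(x|i)/p_{X|Y}(x|j) = p_Y(j)\, w_x(x)_i / (p_Y(i)\, w_x(x)_j)$ and its analogue for $w_1$ produces the ratios of $p_{Z_2|Y}$, which invert back via Bayes' rule to exactly Eq.~(\ref{eq:class-orth}).

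Third, for uniqueness I observe that the right-hand side of Eq.~(\ref{eq:class-orth}) depends only on $w_x$, $w_1$ and $p_Y$; the particular decomposition never appears. Consequently if $z_2'$ is any other orthogonal r.v. of $z_1$ (with some other diffeomorphism $f'$), the same derivation with $f'$ in place of $f$ gives $p(Y = y \mid z_2'(X) = z_2'(x)) = (w_x \setminus w_1)(x)_y$ pointwise, so every orthogonal classifier of $w_1$ coincides with $w_x \setminus w_1$ as a map $\gX \to \Delta^C$.

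The main obstacle is the Jacobian step: one must verify cleanly that $\mathrm{vol}\,J_f$ is label-independent and therefore cancels from the density ratios. This is exactly where Definition~\ref{def:orth}'s requirement that $f$ be a single diffeomorphism (not a $y$-indexed family) becomes essential. A secondary technical point is propagating the regularity of Proposition~\ref{prop:property}.(iii) to strict positivity of $w_x(x)_j$ and $w_1(x)_j$ everywhere, so that the expression $w_x \setminus w_1$ is a bona fide probability vector rather than a division by zero.
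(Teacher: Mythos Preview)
Your proposal is correct and follows essentially the same route as the paper: invoke Proposition~\ref{prop:property}(iii) for existence of $z_2$, then rerun the change-of-variables / density-ratio computation of Section~3.1 (Jacobian volume cancels because $f$ is a single diffeomorphism independent of $y$) to identify $w_2$ with $w_x\setminus w_1$. Your explicit uniqueness argument---that the right-hand side of Eq.~(\ref{eq:class-orth}) never references the particular $z_2$---is a nice addition that the paper's proof leaves implicit.
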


Note that our construction of $w_{x}\setminus w_1$ does not require the exact expression of $z_1(X)$, its orthogonal r.v or data generation process. The theorem states that the orthogonal classifier constructed by classifier orthogonalization is the Bayesian classifier for any orthogonal random variables of $z_1(X)$.

\section{Proofs}
\subsection{Proof for Proposition~\ref{prop:info}}
\info*
\label{app:info-proof}
\begin{proof}
For any function $z$, by chain rule we have $I(z_1(X),z_2(X);Y)=I(z_1(X);Y)+I(z_2(X);Y|z_1(X)) $ and $I(z_1(X),z(X);Y)=I(z_1(X);Y)+I(z(X);Y|z_1(X)) $. Further, the definition of the orthogonal r.v. ensures that there exists a differmorphism $f$ between $X$ and $z_1(X),z_2(X)$, which implies $I(z_1(X),z(f(z_1(X),z_2(X)));Y)=I(z_1(X),z(X);Y)$.

Hence by data processing inequality we have:
\begin{align*}
    I(z_1(X),z_2(X);Y)&\ge I(z_1(X),z(f(z_1(X),z_2(X)));Y) = I(z_1(X),z(X); Y)\\
    \Leftrightarrow I(z_1(X);Y)+I(z_2(X);Y|z_1(X)) &\ge I(z_1(X);Y)+I(z(X);Y|z_1(X))\\
     \Leftrightarrow I(z_2(X);Y|z_1(X)) &\ge I(z(X);Y|z_1(X))
\end{align*}

{
The above inequality shows that $z=z_2$ maximize the mutual information $I(z(X);Y|z_1(X))$. Further, by definition of the orthogonal r.v.,  $z=z_2$ is independent of $z_1$ conditioned on $Y$, i.e. $z(X)\indep z_1(X)|Y$ which is equivalent to $I(z(X); z_1(X)|Y)=0$.}
\end{proof}
\subsection{Proof for Theorem~\ref{thm:pac}}
\thmpac*
\begin{proof}
Denote the empirical risk of $w$ as
\begin{align*}
    \hat{R}(w) = \hat{R}(w) =-\frac{1}{|\gD|}\sum_{(x_i,y_i)\in \gD} \log w(x_i)_{y_i} 
\end{align*}
and the population risk as 
\begin{align*}
    {R}(w) =-\E_{p(x,y)}\left[\log w(x)_y\right] 
\end{align*}

\textit{Step 1: bounding excess risk $|{R}(\hat{w}_x) - R({w}^*_x)|$.} 

By the classical result in theorem 8 in \citet{Bartlett2001RademacherAG}, we know that for any $w\in \gW$, $\delta \in (0,1)$, since $-\log w(x)_y\in (0, \log \frac{1}{m})$, with probability at least $1-\delta$,
\begin{align*}
    |\hat{R}(w) - R(w)|=|\frac{1}{n}\sum_{i=1}^n \log (w(x_i))_{y_i} - \E_p[\log (w(x)_y)]| \le 2{\mathfrak{R}}_{n}( \gW) + 2\log{\frac{1}{m}}\sqrt{\frac{2\log{\frac{1}{\delta}}}{n}}
\end{align*}

Since $\hat{w}_x = \inf_{w_x \in \gW} \hat{R}(w_x), {w}^*_x = \inf_{w_x \in \gW} R(w_x) $, and $|\hat{R}(\hat{w}_x) - R(\hat{w}_x)| \le 2{\mathfrak{R}}_{n}( \gW) + 2\log{\frac{1}{m}}\sqrt{\frac{2\log{\frac{2}{\delta}}}{n}},|\hat{R}({w}^*_x) - R({w}^*_x)| \le 2{\mathfrak{R}}_{n}( \gW) + 2\log{\frac{1}{m}}\sqrt{\frac{2\log{\frac{1}{\delta}}}{n}}$, we have 
\begin{align*}
|{R}(\hat{w}_x) - R({w}^*_x)| \le 2{\mathfrak{R}}_{n}( \gW) + 2\log{\frac{1}{m}}\sqrt{\frac{2\log{\frac{1}{\delta}}}{n}}    \numberthis \label{eq:pac-step1}
\end{align*}

\textit{Step 2: bounding $|R(\hat{w}_x\setminus w_1) - R(w_x^*\setminus w_1)| $ by $|{R}(\hat{w}_x) - R({w}^*_x)|$.}

\begin{align*}
    R(w \setminus w_1) &= -\E_{p(x,y)}\left[\log \frac{\frac{w(x)_y}{w_1(x)_y}}{\sum_{y'}\frac{w(x)_{y'}}{w_1(x)_{y'}}}\right]=R(w)+\E_{p(x,y)}\left[\log {{w_1(x)_y}}{\sum_{y'}\frac{w(x)_{y'}}{w_1(x)_{y'}}}\right]
\end{align*}

Then we have:
\begin{align*}
    &|R(\hat{w}_x\setminus w_1) - R(w_x^*\setminus w_1)| \\
    &\le |{R}(\hat{w}_x) - R({w}^*_x)| + \left|\E_{p(x,y)}\left[\log {{w_1(x)_y}}{\sum_{y'}\frac{\hat{w}_x(x)_{y'}}{w_1(x)_{y'}}}\right]-\E_{p(x,y)}\left[\log {{w_1(x)_y}}{\sum_{y'}\frac{{w}^*_x(x)_{y'}}{w_1(x)_{y'}}}\right]\right|\\
    &= |{R}(\hat{w}_x) - R({w}^*_x)| + \left| \E_{p(x,y)}\left[\log \frac{\sum_{y'}\frac{\hat{w}_x(x)_{y'}}{w_1(x)_{y'}}}{\sum_{y'}\frac{{w}^*_x(x)_{y'}}{w_1(x)_{y'}}}\right]\right|\\
    &\le |{R}(\hat{w}_x) - R({w}^*_x)| + \max\left(\left| \E_{p(x,y)}\left[\log \max_{y'}\frac{\frac{\hat{w}_x(x)_{y'}}{w_1(x)_{y'}}}{\frac{{w}^*_x(x)_{y'}}{w_1(x)_{y'}}}\right]\right|,\left| \E_{p(x,y)}\left[\log \min_{y'}\frac{\frac{\hat{w}_x(x)_{y'}}{w_1(x)_{y'}}}{\frac{{w}^*_x(x)_{y'}}{w_1(x)_{y'}}}\right]\right|\right) \\
\end{align*}
\newcommand{\ymax}{\bar{y}}
\newcommand{\ymin}{\underline{y}}
We define the ratio $r(x,y) = \frac{\hat{w}_x(x)_y}{w^*_x(x)_y}$. We define $\underline{y}(x) := \argmin_{y} r(x,y)$, $\bar{y}(x) := \argmax_{y} r(x,y)$. We have $r(x, \ymin(x)) \leq \frac{\sum_{y'}{\hat{w}_x(x)_{y'}} /  {w_1(x)_{y'}}}{\sum_{y'}{{w}^*_x(x)_{y'}} / {w_1(x)_{y'}}} \leq r(x, \ymax(x))$. Let assume $\frac{1}{p(y|x)} \leq \gamma$ for all $x,y$. For any function $y': \gX \to \gY$, we have the following bound, where we have importance weight $\phi(x,y):= \frac{1}{p(y|x)}$ if $y=y'(x)$ otherwise $0$,
\begin{align*}
|\E_{x} \log r(x, y'(x))| = |\E_{x,y} \phi(x, y) \log r(x, y)| \leq \gamma |\E_{x,y} \log r(x, y)| = \gamma |R(\hat{w}_x) - R(w_x^*))|
\end{align*}
As a result, we have
\begin{align*}
    \left|\E \log \frac{\sum_{y'}{\hat{w}_x(x)_{y'}} /  {w_1(x)_{y'}}}{\sum_{y'}{{w}^*_x(x)_{y'}} / {w_1(x)_{y'}}} \right| \leq \max \left(| \E \log r(x, \ymin(x))|, | \E \log r(x, \ymax(x))| \right) \leq \gamma |R(\hat{w}_x) - R(w_x^*)|
\end{align*}
Thus 
\begin{align*}
|R(\hat{w}_x\setminus w_1) - R(w_x^*\setminus w_1)|&\le |{R}(\hat{w}_x) - R({w}^*_x)| \\
&\quad \quad+ \max\left(\left| \E_{p(x,y)}\left[\log \max_{y'}\frac{\frac{\hat{w}_x(x)_{y'}}{w_1(x)_{y'}}}{\frac{{w}^*_x(x)_{y'}}{w_1(x)_{y'}}}\right]\right|,\left| \E_{p(x,y)}\left[\log \min_{y'}\frac{\frac{\hat{w}_x(x)_{y'}}{w_1(x)_{y'}}}{\frac{{w}^*_x(x)_{y'}}{w_1(x)_{y'}}}\right]\right|\right) \\
    &\le (1+\gamma)|{R}(\hat{w}_x) - R({w}^*_x)|
    \numberthis \label{eq:pac-step2}
\end{align*}

Combining \Eqref{eq:pac-step1} and \Eqref{eq:pac-step2}, we know that with probability $1-\delta$, we have:
\begin{align*}
    |R(\hat{w}_x\setminus w_1) - R(w_x^*\setminus w_1)| \le (1+\gamma)\left({2\mathfrak{R}}_{n}(\gW) + 2\log{\frac{1}{m}}\sqrt{\frac{2\log{\frac{1}{\delta}}}{n}}\right)
\end{align*}
\end{proof}

\subsection{Proof for Proposition~\ref{prop:property} and Theorem~\ref{thm:inv}}
\property*

\begin{proof}

\textit{(i)} Since $Z_1$ and $Z_2$ are independent, the independence also holds for $g(Z_1)$ and $g(Z_2)$. In addition, we construct the diffeomorphism between $(g(Z_1),g(Z_2))$ and $X$ as $\hat{f}(g(Z_1),g(Z_2)) = f(g^{-1}(g(Z_1)), g^{-1}(g(Z_2)))=X$. Apparently $f(g^{-1}(Z_1), g^{-1}(Z_2))$ is a diffeomorphism.

\textit{(ii)} We denote the diffeomorphism between $X'$ and $X$ as $\hat{f}$. Then the diffeomorphism mapping between $(Z_1,Z_2)$ and $X$ is $\hat{f} \circ f$. Thus $Z_1,Z_2$ are mutually orthogonal w.r.t $X'$.

\textit{(iii)} We will use a constructive proof. 

\textit{Step 1}: Denote $\mathrm{dim}(\gX)=d, \mathrm{dim}(z_1(\gX))=k$, then we can prove that the manifold$\begin{bmatrix}z_1(\gX),\mathbb{R}^{d-k}\end{bmatrix}$ is diffeomorphism of $\gX$. We denote the diffeomorphism as $f$, and denote $f(x) = \begin{bmatrix}z_1(x),t(x)\end{bmatrix}$ where $z_1: \gX \to z_1(\gX), t: \gX \to \mathbb{R}^{d-k}$.

\textit{Step 2}: For $X|Y=y$, let $F_j(t(x)_j \mid t(x)_1, \cdots, t(x)_{j-1}, z_1(x),1): \mathbb{R} \to [0, 1], j\in \{1,\dots,d-k\}$ be the CDF corresponding to the distribution of $t(x)_j \mid t(X)_1=t(x)_1, \cdots, t(X)_{j-1}=t(x)_{j-1}, z_1(X)=z_1(x), Y=y$. 
\begin{align*}
    h(t(x);z_1(x))_1 &= F_1(t(x)_1 \mid z_1(x), y)\\
    h(t(x);z_1(x))_i &= F_i(t(x)_i\mid t(x)_{1}, \cdots, t(x)_{i-1},z_1(x),1), i\in \{2,\dots,d\}
\end{align*}

The conditions $p \in C^1, p(x) >0, \forall x \in \gX$ ensure that the PDF of $f(X)$ is also in $\gC^1$ and takes values in $(0,\infty)$. Thus the above CDFs are all diffeomorphism mapping. By the inverse CDF theorem, for any $z_1(x)$, $h(t(X);z_1(X)=z_1(x))$ is a uniform distribution in $(0,1)^{d-k}$. Hence the random variable defined by the mapping $h$, \ie $z_{2,y}(X)=h(t(X);z_1(X))$, is independent of $z_1(X)$ given $Y=1$.

In addition, by the construction of $z_{2,y}(x)$ we know that there exists a diffeomorphism $\hat{f}$ between $(z_1(\gX),t(\gX))$ and $(z_1(\gX), z_{2,y}(\gX))$ such that $\hat{f}(z_1(X), t(X)) = (z_1(X), z_{2,y}(X))$. Then $f^{-1} \circ \hat{f}^{-1}(z_1(X),z_{2,y}(X))=X$ is also a diffeomorphsim mapping.

\textit{Step 3}: From the condition we know that for every $y\in \gY$, there exists a diffeomorphism function $m_y$ such that $m_y\circ z_{2,y}(x)=z_{2,1}(x)$. Apparently $\forall y$, $m_y\circ z_{2,y}(X)\indep z_{2,1}(X)|Y=y$ by $z_{2,y}(X)\indep z_{1}(X)|Y=y$. Further, $(z_{1},z_{2,1})$ is a diffeomorphism. Hence $z_{2,1}(X)$ is the orthogonal r.v. of $z_{1}(X)$ w.r.t $P_{XY}$.

\end{proof}

\inv*

\begin{proof}
By the existence property in Proposition~\ref{prop:property}, we know that for $X$, there exists a orthogonal r.v. of $z_1(X)$ and denote it as $z_2(X_1)$. By the proposition we know that there exists a common diffeomorphism $f$ mapping $(z_1(X|y), z_2(X|y))$ to $X|y$, $\forall y \in \gY$.


Further, by change of variables and the definition of orthogonal r.v., we have $    \frac{p_{i,2}(z_2)}{p_{j,2}(z_2)} = \frac{p_{i,1}(z_1)p_{i,2}(z_2)\textrm{vol}J_{f}(z_1,z_2)}{p_{j,1}(z_1)p_{j,2}(z_2)\textrm{vol}J_{f}(z_1,z_2)}/ \frac{p_{i,1}(z_1)}{p_{j,1}(z_1)}= \frac{p_i(x)}{p_j(x)}/ \frac{p_{i,1}(z_1)}{p_{j,1}(z_1)} = \frac{w_x(x)_i}{w_x(x)_j}/\frac{w_1(x)_i}{w_1(x)_j}$. Thus via the bijection of classifier and density ratios we know that $w_x\setminus w_1(x)_i= p(Y=i|z_2(x))$.
\end{proof}

\subsection{Proof for Proposition~\ref{prop:style}}
\label{app:style}
\style*
\begin{proof}
By definition, $r(x)=\frac{w_2(x)}{1 - w_2(x)}=\frac{P(z_2(x))}{Q(z_2(x))}$. Thus we can reformulate the criterion $\LOgan^{AB}$ as the following:
\begin{align*}
\LOgan^{AB}(\GAB) &= \E_{x\sim \PG}\log \frac{\PG(x)}{\PG(x)+ Q(x)r(x)}\\
    &=\E_{z_1,z_2\sim \PG}\log \frac{\PG(z_1,z_2)}{\PG(z_1,z_2) + Q(z_1)Q(z_2)\frac{P(z_2)}{Q(z_2)}}\\
    &\ge -\log \E_{z_1,z_2\sim \PG}[\frac{\PG(z_1,z_2) + Q(z_1){P(z_2)}}{\PG(z_1,z_2)}] = -\log 2 
\end{align*}
where we get the lower bound by Jensen's inequality. The equality holds when $\PG(z_1,z_2)= Q(z_1){P(z_2)}$.
\end{proof}

\subsection{Proof for Proposition~\ref{prop:stationary}}
\stationary*
\begin{proof}
By definition, optimal classifier $w_x(x)_0 = \frac{p_s(f(x))}{p_s(f(x))+p_t(f(x))}$, principle classifier $w_1(x)_0 = \frac{p_s(y(x))}{p_s(y(x))+p_t(y(x))}$.
By the definition of orthogonal r.v., $(Y,U)$ and $f(X)$ have a bijection between them. 
It suggests, conditioned on $Y$, the supports of $f(X)|Y=y$ is non-overlapped for different $y$ in both domains $p_s$ and $p_t$. It means if $p_s(f(x)|y) > 0$ then $\forall y'\neq y$, $p_s(f(x)|y') = 0$.

Thus the orthogonal classifier $w_x\setminus w_1$ satisfies:
\begin{align*}
    (w_x\setminus w_1)(x)_0 &= \frac{p_s(f(x))}{p_s(y(x))}/(\frac{p_s(f(x))}{p_s(y(x))}+\frac{p_t(f(x))}{p_t(y(x))})\\
    &= \frac{\sum_{y'} p_s(f(x)|y')p_s(y')}{p_s(y(x))} \Biggm/ \left(\frac{\sum_{y'} p_s(f(x)|y')p_s(y')}{p_s(y(x))}+\frac{\sum_{y'} p_t(f(x)|y')p_t(y')}{p_t(y(x))}\right)\\
    &=\frac{p_s(f(x)|y(x))p_s(y(x))}{p_s(y(x))}\Biggm/\left(\frac{p_s(f(x)|y(x))p_s(y(x))}{p_s(y(x))}+\frac{p_t(f(x)|y(x))p_t(y(x))}{p_t(y(x))}\right)\\
    &=\frac{p_s(f(x)|y(x))}{p_s(f(x)|y(x)) +p_t(f(x)|y(x))}
\end{align*}
Thus the objective in \Eqref{eq:orth-da} can be reformulated as, 
\begin{align*}
&\gL(f,(w_x\setminus w_1)(\cdot)_0) 
\\=& 
\E_{x\sim p_s}[\log \frac{p_s(f(x)|y(x))}{p_s(f(x)|y(x)) +p_t(f(x)|y(x))}] +\E_{x \sim p_t}[\log \frac{p_t(f(x)|y(x))}{p_s(f(x)|y(x)) +p_t(f(x)|y(x))}] 
\\=&\E_{y\sim p_s} \E_{x\sim p_s(x|y)} [-\log \frac{p_s(f(x)|y) +p_t(f(x)|y)}{p_s(f(x)|y)}] +\E_{y\sim p_t} \E_{x\sim p_t(x|y)} [-\log \frac{p_s(f(x)|y) +p_t(f(x)|y)}{p_t(f(x)|y)}]
\\ \geq&  -\E_{y\sim p_s} \left[\log \E_{x\sim p_s(x|y)} [\frac{p_s(f(x)|y) +p_t(f(x)|y)}{p_s(f(x)|y)}] \right] -\E_{y\sim p_t} \left[\log \E_{x\sim p_t(x|y)} [\frac{p_s(f(x)|y) +p_t(f(x)|y)}{p_t(f(x)|y)}] \right]
\\=& -2\log2.
\end{align*}
The lower-bound holds due to Jensen’s inequality. The equality is achieved if and only if $\frac{p_s(f(x)|y)}{p_t(f(x)|y)}$ is invariant to $x$, for all $y$, \ie $\forall y \in \gY, {p_s(f(x)|y)}={p_t(f(x)|y)}$.

\end{proof}
\subsection{Proof for Proposition~\ref{prop:fairness}}
\fairness*
\begin{proof}
We denote the orthogonal random variables as $z_2(X)$, and $w_x\setminus w_1(x)=\Pr[y|z_2(x)]$. Since $z_2(X)\indep U|Y$, we know that $w_x\setminus w_1(X) \indep U|Y$. Hence the prediction of the classifier $w_x\setminus w_1$ is conditional independent of sensitive attribute $U$ on the ground-truth label $Y$, which meets the equalized odds metric.
\end{proof}

\section{Extra Samples}
\label{app:samples}
\begin{figure*}[htbp]
    \centering
        \subfigure[CMNIST: Domain A]{\includegraphics[width=0.48\linewidth]{img/mnist_A_2.pdf}}\hfill\hspace{1pt}
     \subfigure[CMNIST: Domain B]{\includegraphics[width=0.48\linewidth]{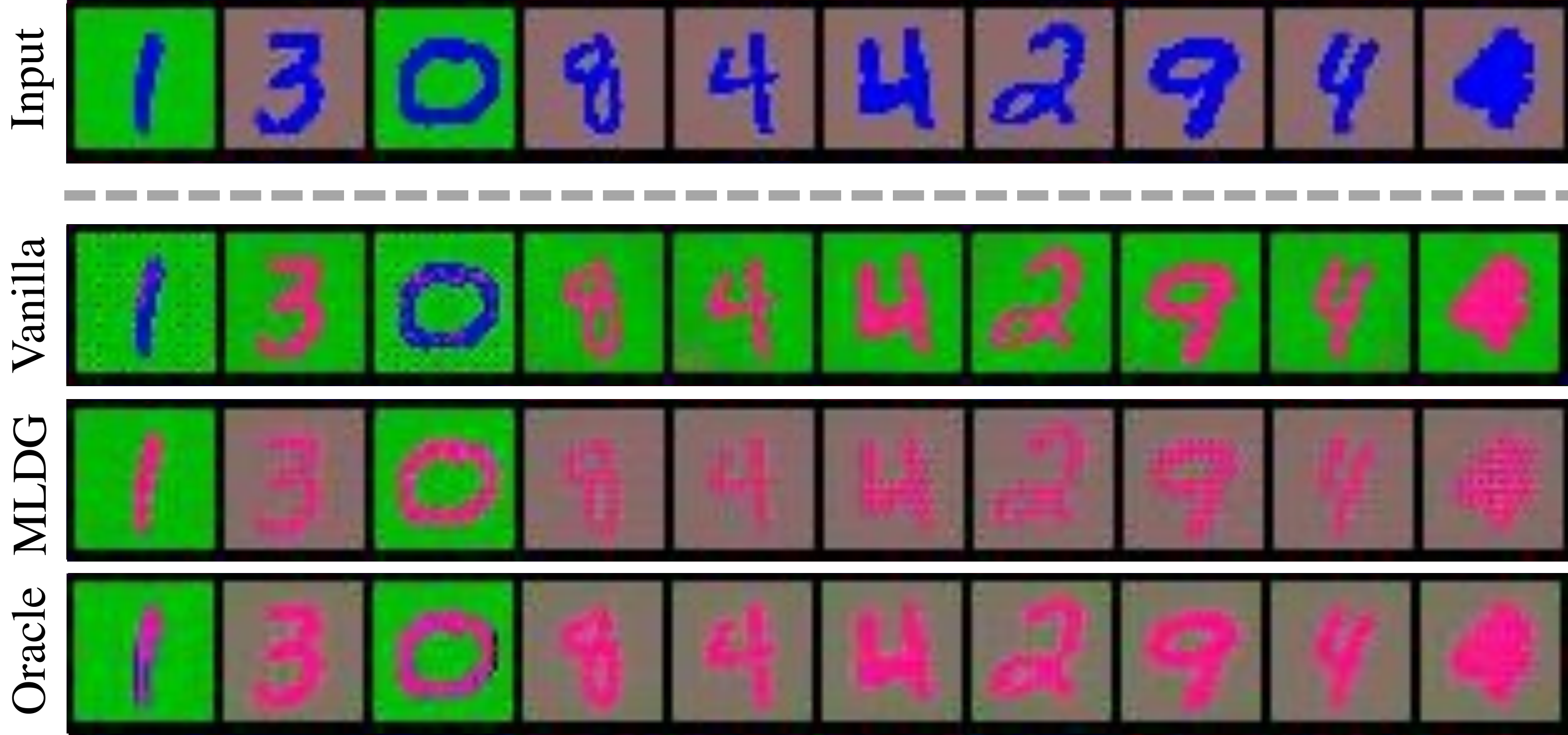}}
    \subfigure[CelebA-GH: Domain A]{\includegraphics[width=0.48\linewidth]{img/celeba_A.pdf}}\hfill\hspace{1pt}
     \subfigure[CelebA-GH: Domain B]{\includegraphics[width=0.48\linewidth]{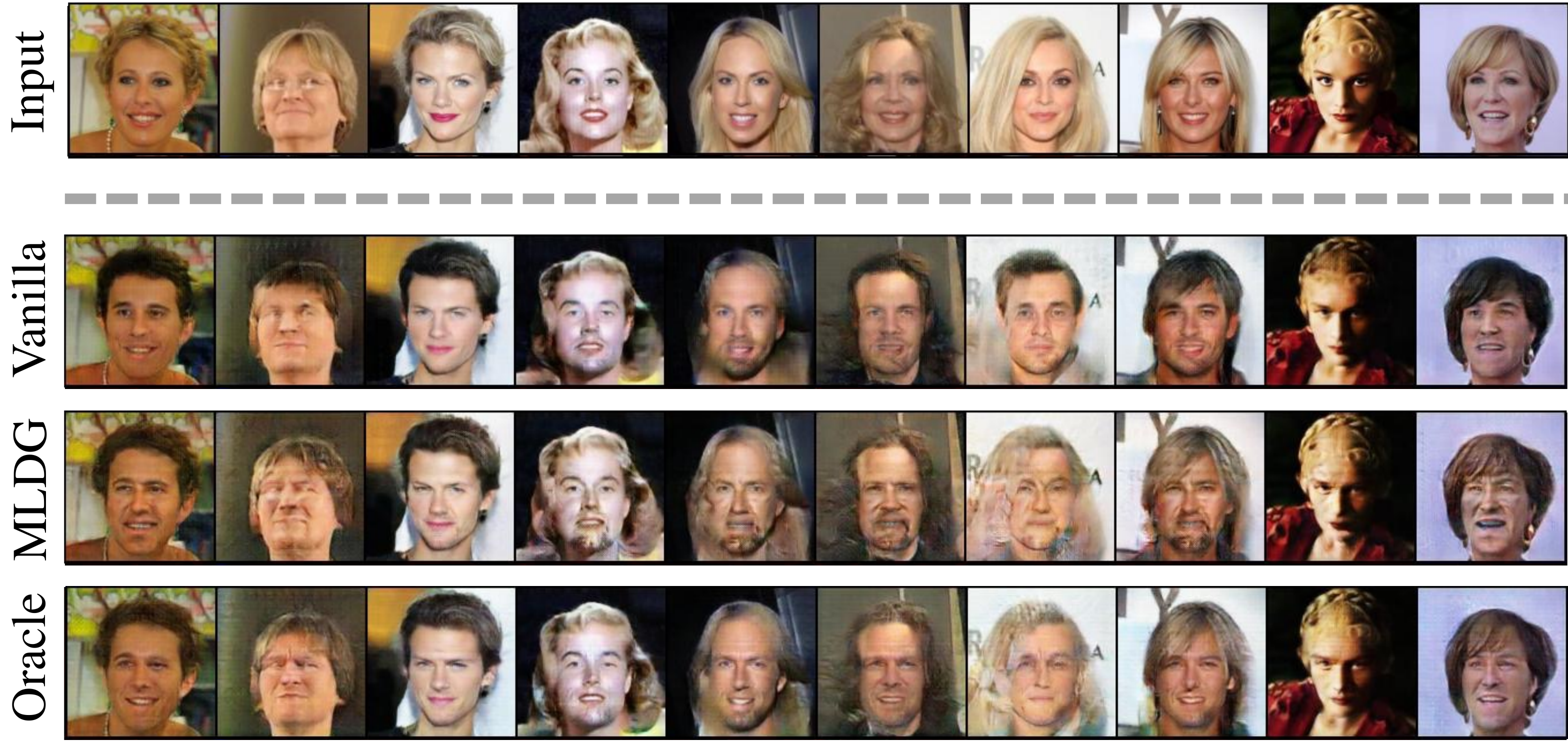}}
    \caption{CMNIST and CelebA-GH}
    \label{fig:style-extent}
\end{figure*}

\section{Extra Experiments}

\subsection{Style transfer}

\begin{table*}[htbp]
\centering
\caption{CelebA}
\label{table:celeba}
    \begin{tabular}{l c c c}
    \toprule
         & $Z_1$ acc. & $Z_2$ acc.  & FID $\downarrow$ \\
         \midrule
         Vanilla & 75.3& 87.0   & 36.2 \\
         $w_x\setminus w_1$-{MLDG} & 81.8& 93.5 & 35.2\\
                   $w_x\setminus w_1$-{TRM} &76.3 & 92.5 & 33.5\\
          $w_x\setminus w_1$-{Fish} & {74.2} & {93.3} & 33.1\\
         \midrule
         IS-{Oracle} & 73.3 & 88.1 & 36.5 \\
         $w_x\setminus w_1$-{Oracle} & \textbf{84.0}& \textbf{95.2}  & 38.5\\
         \bottomrule
    \end{tabular}
\end{table*}

We show transferred samples for both domains in \Figref{fig:style-extent} on CMNIST and CelebA-GH. 

\subsection{Style transfer on CelebA}
Unlike the split in CelebA-GH dataset, domain A is males, and domain B is females. The two domains together form the full CelebA~\citep{Liu2015DeepLF} with 202599 images. Note that there exists large imbalance of hair colors within the male group, with only around $2\%$ males having blond hair. Table~\ref{table:celeba} reports the $Z_1$/$Z_2$ accuracies and FID scores on the full CelebA datasets. We find that our method improves the style transfer on the original CelebA dataset, especially the $Z_2$ accuracy. It shows that our method can help style transfer in  real-world datasets with multiple varying factors.

\section{Extra Experiment Details}
\label{app:exp}
\subsection{Style transfer}
\label{app:exp-style}
\subsubsection{Dataset details}
We randomly sampled two digits colors and two background colors for CMNIST. For CelebA-GH, we extract two subsets---male with non-blond hair and female with blond hair---by the attributes provided in CelebA dataset. For all datasets, we use 0.8/0.2 proportions to split the train/test set.

For CelebA dataset, following the implementation of \citet{Zhu2017UnpairedIT}~(\url{https://github.com/junyanz/CycleGAN}), we use 128-layer UNet as the generator and the discriminator in PatchGAN. For CMNIST dataset, we use a 6-layer CNN as the generator and a 5-layer CNN as the discriminator. We adopt Adam with learning rate 2e-4 as the optimizer and batch size 128/32 for CMNIST/CelebA.

\subsubsection{Details of Models}

We craft datasets for training the oracle $w_1(x) = \Pr(Y|z_1(x))$. For CMNIST, oracle dataset has bias degrees $0.9/0$. For CelebA-GH, oracle dataset has \{male, female\} as classes and the hair colors under each class are balanced.

For each dataset, we construct two environments to train the domain generalization models. For CMNIST, the two environments are datasets with bias degrees $0.9/0.4$ and $0.9/0.8$. For CelebA-GH, the two environments consist of different groups of equal size: one is \{non-blond-haired males, blond-haired females\} and the other is \{non-blond-haired males, blond-haired females, non-blond-haired females, blond-haired males\}. Note that the facial characteristic of gender~($Z_1$) is the invariant prediction mechanism across environments, instead of the hair color.

\subsubsection{Evaluation for $Z_1$/$Z_2$ accuracy}
\label{app:metric-style}
To evaluate the accuracies on $Z_1$, $Z_2$ latents, we train two oracle classifiers $w_1^*(x) = \Pr(Y|z_1(x)), w_2^*(x) = \Pr(Y|z_2(x))$ on two designed datasets. Specifically, $w_1$/$w_2$ are trained on datasets $\gD_1$/$\gD_2$ with $Z_1$/$Z_2$ as the only discriminative direction. For CMNIST, $\gD_1$ is the dataset with bias degrees $0.9/0$ and $\gD_2$ is the dataset with bias degrees $0/0.8$. For CelebA-GH, $\gD_1$ is the dataset with \{male, female\} as classes and the hair colors under each class are balanced. $\gD_2$ is the dataset with \{blond hair, non-blond hair\} as classes and the 
males/females under each class are balanced.

The $Z_1$ accuracy on dataset $\gD$ for transfer model $G$ is:
\begin{align*}
    Z_1 \text{ accuracy} = \frac{1}{|\gD|}\sum_{x\in \gD}\mathbb{I}\left(\argmax_y w_1^*(x) \not= \argmax_y w_1^*(G(x))\right)
\end{align*}
And $Z_2$ accuracy is:
\begin{align*}
    Z_1 \text{ accuracy} = \frac{1}{|\gD|}\sum_{x\in \gD}\mathbb{I}\left(\argmax_y w_2^*(x) = \argmax_y w_2^*(G(x))\right)
\end{align*}
where $\mathbb{I}$ is the indicator function. $Z_1$ accuracy measures the success rate of transferring the $Z_1$ and $Z_2$ accuracy measures the success rate of keeping $Z_2$ aspects unchanged.

\subsection{Domain adaptation}
\label{app:exp-da}

The full names for the seven datasets are MNIST, MNIST-M, SVHN, GTSRB, SYN-DIGITS (DIGITS), SYN-SIGNS (SIGNS), CIFAR-10 (CIFAR) and STL-10 (STL). We sub-sample each dataset by the designated imbalance ratio to construct the pairs.

We use the 18-layer neural network in \citet{Shu2018ADA} with pre-process via instance-normalization. Following \citet{Shu2018ADA}, smaller CNN is used for digits~(MNIST, MNIST-M, SVHN, DIGITS) and traffic signs~(SIGNS, GTSRB) and larger CNN is used for CIFAR-10 and STL-10. We use the Adam optimizer and hyper-parameters in Appendix B in \citet{Shu2018ADA} except for MNIST $\to$ MNIST-M, where we find that setting $\lambda_s=1,\lambda_d=1e-1$ largely improves the VADA performance over label shifts. Furthermore, we set $\lambda_d=1e-2$, \ie the default value suggested by \citet{Shu2018ADA}, to enable adversarial feature alignment.

\subsection{Fairness}
\label{app:exp-fair}
The two datasets are: the UCI Adult dataset\footnote{\hyperlink{https://archive.ics.uci.edu/ml/datasets/adult}{https://archive.ics.uci.edu/ml/datasets/adult}} which has gender as the sensitive attribute; the UCI German credit dataset\footnote{\hyperlink{https://archive.ics.uci.edu/ml/datasets/Statlog+\%28German+Credit+Data\%29}{https://archive.ics.uci.edu/ml/datasets/Statlog+\%28German+Credit+Data\%29}} which has age as the sensitive attribute. We borrow the encoder, decoder and the final fc layer from \citet{Song2019LearningCF}. For fair comparison, we use the same encoder+fc layer as the function family $\gW$ of our classifier since our method do not need decoder for reconstruction. We use Adam with learning rate 1e-3 as the optimizer, and a batch size of 64.

{The original ReBias method trains a set of biased models to learn the de-biased representations~\citep{Bahng2020LearningDR}, such as designing CNNs with small receptive fields to capture the texture bias. However, in the fairness experiments, the bias is explicitly given, \ie the sensitive attribute. For fair comparison, we set the HSIC between the sensitive attribute and the representations as the de-bias regularizer.}

\section{{Relationship to orthogonal input gradients}}
\label{app:orth-gradient}
{The orthogonality constraints on input gradients demonstrate good performance in learning a set of diverse classifiers~\citep{Ross2017RightFT,Ross2018LearningQD,Ross2020EnsemblesOL,Teney2021EvadingTS}. They typically use the dot product of input gradients between pairs of classifiers as the regularizer. We highlight that when facing the latent orthogonal random variables, the orthogonal gradient constraints on input space no longer guarantees to learn the orthogonal classifier. We demonstrate it on a simple non-linear example below.}

{Consider a binary classification problem with the following data generating process. Label $Y$ is uniformly distributions in $\{-1,1\}$. Conditioned on the label, we have two independent $k$-dimensional latents $Z_1, Z_2$ such that $Z_1 | Y=y \sim \mathcal{N}(y \mu_1, I_k)$, $Z_2 | Y=y \sim \mathcal{N}(y \mu_2, I_k)$. $\mu_1 \in \mathbb{R}^k,\mu_2 \in \mathbb{R}^m$ are the means of the latent variables, and $k>m$. The data $X$ is generated from latents via the following diffeomorphism $X=\begin{pmatrix}g(Z_1-\begin{pmatrix} Z_2\\0_{(k-m)}  \end{pmatrix}) \\ Z_2\end{pmatrix}$ where $g$ is an arbitrary non-linear diffeomorphism on $\R^k$. We can see that $Z_1, Z_2$ are mutually orthogonal w.r.t the distribution $p_{XY}$.
Now consider the Bayes optimal classifiers $w_1, w_2$ using variable $Z_1, Z_2$. We have $w_1(x) = p_{Y|Z_1}(1|z_1)=\sigma(2\mu_1^Tz_1)$ and $w_2(x) = p_{Y|Z_2}(1|z_2)=\sigma(2\mu_2^Tz_2)$
, where $\sigma$ is the sigmoid function. Apparently, they are a pair of orthogonal classifiers. Besides, we denote the classifier based on the first $m$ dimensions in $z_1$ for prediction as $w_3$, \ie $w_3(x) = p_{Y|(Z_1)_{:m}}(1|(z_1)_{:m})=\sigma(2(\mu_1)_{:m}^T(z_1)_{:m})$.
}

{Given the principal classifier $w_1$, we consider using the loss function in \cite{Ross2018LearningQD, Ross2020EnsemblesOL} to learn the orthogonal classifier of $w_1$, by adding a input gradient penalty term to the standard classification loss:
\begin{align*}
    \gL_{w_1}(w) = \gL_c(w) + \lambda \E[\textrm{cos}^2(\nabla_x w_1(x),\nabla_x w(x))]
\end{align*}
$\gL_c$ is the expected cross-entropy loss, $\textrm{cos}$ is the cosine-similarity and $\lambda$ is the hyper-parameter for the gradient penalty term~\citep{Ross2018LearningQD,Ross2020EnsemblesOL}. Below we show that (1) the orthogonal classifier $w_2$ does not necessarily satisfy $\E[\textrm{cos}^2(\nabla_x w_1(x),\nabla_x w_2(x))]=0$. (2) the orthogonal classifier $w_2$ is not necessarily the minimizer of $\gL_{w_1}(w)$.}

\begin{proposition}
\label{prop:non-zero}
For the above $(Z_1,Z_2)$, their corresponding Bayes classifiers $w_1,w_2$ satisfy $\E[\nabla_x w_1(x)^T\nabla_x w_2(x)]=0$ if and only if $(\mu_1)_{:m}^T\mu_2=0$. 
\end{proposition}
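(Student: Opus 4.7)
The plan is to exploit the specific structure of the diffeomorphism to show that, after inverting, $w_2$ depends only on one block of coordinates of $x$, so the inner product of input gradients reduces to a single explicit expression whose sign is controlled by $(\mu_1)_{:m}^T\mu_2$.

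First I would invert the data-generating diffeomorphism. Writing $X=(X_1,X_2)$ with $X_1\in\mathbb{R}^k$ and $X_2\in\mathbb{R}^m$, the definition gives $X_2=Z_2$ and $X_1=g(Z_1-\binom{Z_2}{0})$, so
\begin{equation*}
z_2(x)=x_2,\qquad z_1(x)=g^{-1}(x_1)+\binom{x_2}{0}.
\end{equation*}
Thus $w_2(x)=\sigma(2\mu_2^T x_2)$ and $w_1(x)=\sigma\bigl(2\mu_1^T g^{-1}(x_1)+2(\mu_1)_{:m}^T x_2\bigr)$.

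Second I would compute the input gradients by the chain rule. Since $w_2$ depends only on $x_2$, we have $\nabla_{x_1}w_2(x)=0$ and $\nabla_{x_2}w_2(x)=2\sigma'(2\mu_2^T x_2)\,\mu_2$. For $w_1$, only the $x_2$-block matters for the inner product, and it equals $\nabla_{x_2}w_1(x)=2\sigma'(2\mu_1^T z_1(x))\,(\mu_1)_{:m}$. Hence
\begin{equation*}
\nabla_x w_1(x)^T\nabla_x w_2(x)=4\,\sigma'(2\mu_1^T z_1(x))\,\sigma'(2\mu_2^T x_2)\,(\mu_1)_{:m}^T\mu_2.
\end{equation*}

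Finally I would take expectations. Because $\sigma'>0$ everywhere, the scalar factor $4\,\sigma'(2\mu_1^T z_1(x))\,\sigma'(2\mu_2^T x_2)$ is almost-surely strictly positive, so $\mathbb{E}[\nabla_x w_1(x)^T\nabla_x w_2(x)]$ has the same sign as $(\mu_1)_{:m}^T\mu_2$ and vanishes iff $(\mu_1)_{:m}^T\mu_2=0$. This gives both directions of the equivalence.

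The main obstacle is simply bookkeeping: one has to notice that the contribution from the $x_1$-block vanishes identically (because $w_2$ ignores $x_1$), which makes the apparently complicated Jacobian of $g^{-1}$ drop out entirely. Once that is noticed, the result follows from the strict positivity of $\sigma'$ and requires no calculation beyond the chain rule.
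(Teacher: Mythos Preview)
Your proposal is correct and follows essentially the same approach as the paper: both compute the input gradients via the chain rule, obtain the pointwise inner product $4\,\sigma'(2\mu_1^T z_1(x))\,\sigma'(2\mu_2^T z_2(x))\,(\mu_1)_{:m}^T\mu_2$, and conclude from the strict positivity of the sigmoid derivative. Your version is slightly more streamlined because you observe up front that $w_2$ depends only on the $x_2$-block, which lets you skip writing out the full Jacobian matrices that the paper multiplies explicitly.
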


\begin{proof}
Let $X_1 = g(Z_1- \begin{pmatrix} Z_2\\0_{(k-m)}  \end{pmatrix})$, $X_2 = Z_2$ are the two components of $X$. Note that $z_1 = g^{-1}(x_1)+ \begin{pmatrix} x_2\\0_{(k-m)}  \end{pmatrix}$. By chain rule, the input gradient of $w_1$ is 
\begin{align*}
    \nabla_x w_1(x) &= \frac{dz_1}{dx}\nabla_{z_1} \Pr(Y=1|Z_1=z_1(x))\\
    &= \begin{pmatrix}\nabla_{x_1} g^{-1}(x_1)\\ I_{m\times k} \end{pmatrix}_{(k+m)\times k}\frac{e^{-2\mu_1^Tz_1(x)}}{(1+e^{-2\mu_1^Tz_1(x)})^2}\cdot 2\mu_1
\end{align*}
where $I_{m\times k}$ is the submatrix of $I_{k\times k}$. Similarly we get $\nabla_x w_2(x)=\begin{pmatrix} 0\\ I_{m\times m}\end{pmatrix}_{(k+m)\times m}\frac{e^{-2\mu_2^Tz_2(x)}}{(1+e^{-2\mu_2^Tz_2(x)})^2} \cdot 2\mu_2$. Together, the dot product between input gradients is 
\begin{align*}
    \nabla_x w_1(x)^T\nabla_x w_2(x) &= \frac{2e^{-2\mu_1^Tz_1(x)}\mu_1^T}{(1+e^{-2\mu_1^Tz_1(x)})^2} \begin{pmatrix}\nabla_{x_1} g^{-1}(x_1)\\ I_{m\times k} \end{pmatrix}_{(k+m)\times k}^T
    \begin{pmatrix}
    0 \\
    I_{m\times m}\end{pmatrix}_{(k+m)\times m}\frac{2e^{-2\mu_2^Tz_2(x)}\mu_2}{(1+e^{-2\mu_2^Tz_2(x)})^2}\\
    &=\frac{4e^{-2\mu_1^Tz_1(x)-2\mu_2^Tz_2(x)}(\mu_1)_{:m}^T\mu_2}{(1+e^{-2\mu_1^Tz_1(x)})^2(1+e^{-2\mu_2^Tz_2(x)})^2}
    \end{align*}
    
Since $\frac{4e^{-2\mu_1^Tz_1(x)-2\mu_2^Tz_2(x)}}{(1+e^{-2\mu_1^Tz_1(x)})^2(1+e^{-2\mu_2^Tz_2(x)})^2}>0$, $\E[\nabla_x w_1(x)^T\nabla_x w_2(x)]$ if and only if $(\mu_1)_{:m}^T\mu_2=0$.
\end{proof}
{Proposition~\ref{prop:non-zero} first shows that the expected input gradient product of the two Bayes classifiers of underlying latents is non-zero, unless the linear decision boundaries on $z_1,z_2$ are orthogonal. Hence, given the principal classifier $w_1$, the expected dot product of input gradients between $w_1$ and its orthogonal classifier $w_2$ does not have to be zero.}
\begin{proposition}
\label{prop:non-min}
The orthogonal classifer $w_2$ does not minimize $\gL_{w_1}(w)$ if $(\mu_1)_{:m}=\mu_2$ and $\mu_1^T\nabla_{x_1} g^{-1}(x_1)^T\nabla_{x_1} g^{-1}(x_1)_{k\times m}(\mu_1)_{:m}<0$. Particularly, we show that $\gL_{w_1}(w_3)< \gL_{w_1}(w_2)$ in this case.
\end{proposition}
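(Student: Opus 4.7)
The strategy is to exhibit $w_3$ as a strict improvement over $w_2$ under $\mathcal{L}_{w_1}$, analyzing the classification term and the cosine penalty separately. Under the hypothesis $(\mu_1)_{:m} = \mu_2$, the features $Z_2$ and $(Z_1)_{:m}$ have identical class-conditional laws $\mathcal{N}(\pm \mu_2, I_m)$, and both $w_2(x)=\sigma(2\mu_2^\top z_2(x))$ and $w_3(x)=\sigma(2(\mu_1)_{:m}^\top(z_1(x))_{:m})$ are the Bayes-optimal classifiers for these features. By the invariance of Bayes cross-entropy risk under identical conditional distributions, $\mathcal{L}_c(w_2)=\mathcal{L}_c(w_3)$, so the claim reduces to showing $\mathbb{E}[\cos^2(\nabla_x w_1,\nabla_x w_3)] < \mathbb{E}[\cos^2(\nabla_x w_1,\nabla_x w_2)]$.

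Next I would reuse the chain-rule computation from the proof of Proposition~\ref{prop:non-zero}. Writing $J(x_1):=\nabla_{x_1} g^{-1}(x_1)\in\mathbb{R}^{k\times k}$ and $J_{:m}\in\mathbb{R}^{k\times m}$ for its first $m$ columns, one gets
\[
\nabla_x w_1 \propto \binom{J\mu_1}{(\mu_1)_{:m}}, \quad
\nabla_x w_2 \propto \binom{0_k}{\mu_2}, \quad
\nabla_x w_3 \propto \binom{J_{:m}(\mu_1)_{:m}}{(\mu_1)_{:m}},
\]
where the omitted factors are strictly positive sigmoid derivatives, which cancel in $\cos^2$. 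With the shorthands $b:=\|\mu_2\|^2$, $\alpha(x_1):=\|J\mu_1\|^2$, $\beta(x_1):=\|J_{:m}\mu_2\|^2$, and $a(x_1):=\mu_1^\top J^\top J_{:m}\mu_2$ (so $a<0$ is the hypothesis), substitution yields
\[
\cos^2(\nabla_x w_1,\nabla_x w_2) = \tfrac{b}{\alpha+b}, \qquad
\cos^2(\nabla_x w_1,\nabla_x w_3) = \tfrac{(a+b)^2}{(\alpha+b)(\beta+b)}.
\]

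The comparison therefore reduces to the scalar inequality $(a+b)^2 < b(\beta+b)$, equivalently $a^2+2ab < b\beta$. The hypothesis $a<0$ makes $2ab$ strictly negative and, together with the Cauchy–Schwarz bound $a^2\le\alpha\beta$, can be pushed through in the regime described by the proposition to obtain a pointwise strict inequality in $x$. Integrating over $x$ then gives $\mathbb{E}\cos^2(\nabla_x w_1,\nabla_x w_3) < \mathbb{E}\cos^2(\nabla_x w_1,\nabla_x w_2)$; combined with the first paragraph this produces $\mathcal{L}_{w_1}(w_3)<\mathcal{L}_{w_1}(w_2)$ for every $\lambda>0$, so $w_2$ is not a minimizer. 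The main obstacle is this last scalar step: the sign condition $a<0$ by itself does not uniformly control $|a|$, so care is needed to exploit the negative alignment together with Cauchy–Schwarz (and the structure of $J(x_1)$) to get a strict pointwise reduction in the cosine, rather than merely an equality or the opposite inequality.
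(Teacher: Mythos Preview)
Your proposal is essentially the paper's own argument: show $\mathcal{L}_c(w_2)=\mathcal{L}_c(w_3)$ from the identical class-conditional laws, compute the three input gradients via the chain rule, and compare the two squared cosines. Your reduction to the scalar inequality $(a+b)^2<b(\beta+b)$ is just an algebraic repackaging of what the paper does by separately bounding the numerators (using $a<0$, so $\|\mu_2\|^2+a<\|\mu_2\|^2$) and the denominators (using $\bigl\|\bigl(\begin{smallmatrix}J\\ I_{m\times k}\end{smallmatrix}\bigr)\mu_2\bigr\|\ge\bigl\|\bigl(\begin{smallmatrix}0\\ I\end{smallmatrix}\bigr)\mu_2\bigr\|$).

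The obstacle you flag at the end is genuine and, notably, is glossed over in the paper's proof as well: from $a<0$ alone one does not get $(a+b)^2<b^2$ unless $|a|<2b$, so neither your Cauchy--Schwarz bound $a^2\le\alpha\beta$ nor the paper's numerator/denominator split closes the argument for arbitrary $a<0$. The paper sidesteps this only by exhibiting a concrete instance ($k=3$, $m=2$, $g^{-1}$ linear, $\mu_1=(1,1,1)^\top$) in which $a=-2=-\|\mu_2\|^2$, so that $a+b=0$ and the inequality is trivial; your proposal would be on equally solid ground by pointing to the same example rather than trying to establish the scalar inequality in full generality.
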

\begin{proof}
Let $w_3(x) = p_{Y|(Z_1)_{:m}}(1|(z_1)_{:m})$. The input gradient of $w_3(x)$ is 
\begin{align*}
    \nabla_x w_3(x) = (\nabla_x w_1(x))_{:m} &= \frac{dz_1}{dx}\nabla_{(z_1)_{:m}} \Pr(Y=1|(Z_1)_{:m}=(z_1(x))_{:m})\\
    &= \begin{pmatrix}(\nabla_{x_1} g^{-1}(x_1))_{k\times m}\\ I_{m\times m}\end{pmatrix}_{(k+m)\times m}\frac{e^{-2(\mu_1)_{:m}^Tz_1(x)_{:m}}}{(1+e^{-2(\mu_1)_{:m}^Tz_1(x)_{:m}})^2}\cdot 2(\mu_1)_{:m}
\end{align*}

When $(\mu_1)_{:m}=\mu_2$, the dot-product between $ \nabla_x w_3(x) , \nabla_x w_1(x)$ is 
\begin{align*}
    \nabla_x w_1(x)^T\nabla_x w_3(x) &= \frac{2e^{-2\mu_1^Tz_1(x)}\mu_1^T}{(1+e^{-2\mu_1^Tz_1(x)})^2} \begin{pmatrix}\nabla_{x_1} g^{-1}(x_1)\\ I_{m\times k}\end{pmatrix}_{(k+m)\times k}^T\\
    &\qquad
    \begin{pmatrix}(\nabla_{x_1} g^{-1}(x_1))_{k\times m}\\ I_{m\times m} \end{pmatrix}_{(k+m)\times m}\frac{e^{-2(\mu_1)_{:m}^Tz_1(x)_{:m}}}{(1+e^{-2(\mu_1)_{:m}^Tz_1(x)_{:m}})^2}\cdot 2(\mu_1)_{:m}\\
    &=\frac{4e^{-2\mu_1^Tz_1(x)-2(\mu_1)_{:m}^Tz_1(x)_{:m}}}{(1+e^{-2\mu_1^Tz_1(x)})^2(1+e^{-2(\mu_1)_{:m}^Tz_1(x)_{:m}})^2}\left(\parallel \mu_2\parallel_2^2 +\mu_1^T\nabla_{x_1} g^{-1}(x_1)^T\nabla_{x_1} g^{-1}(x_1)_{k\times m}(\mu_1)_{:m} \right)
    \end{align*}
When $(\mu_1)_{:m}=\mu_2$, we know that $z_1(x),z_2(x)$ are equally predictive of the label and thus $\gL_c(w_3)=\gL_c(w_2)$. Note that if $\mu_1^T\nabla_{x_1} g^{-1}(x_1)^T\nabla_{x_1} g^{-1}(x_1)_{k\times m}(\mu_1)_{:m}<0 $, we have $\E[\textrm{cos}^2(\nabla_x w_1(x),\nabla_x w_2(x))]> \E[\textrm{cos}^2(\nabla_x w_1(x),\nabla_x w_3(x))]$:
\begin{align*}
    &\E[\textrm{cos}^2(\nabla_x w_1(x),\nabla_x w_2(x))] \\
    &= \E\left[\left(\frac{\parallel \mu_2\parallel_2^2}{\parallel \begin{pmatrix}\nabla_{x_1} g^{-1}(x_1)\\ I\end{pmatrix}_{(k+m)\times k}\mu_1\parallel_2\parallel\begin{pmatrix} 0\\ I\end{pmatrix}_{(k+m)\times m}\mu_2 \parallel_2}\right)^2\right]\\
    &> \E\left[\left(\frac{\left(\parallel \mu_2\parallel_2^2 +\mu_1^T\nabla_{x_1} g^{-1}(x_1)^T(\nabla_{x_1} g^{-1}(x_1))_{k\times m}\mu_2 \right)}{\parallel \begin{pmatrix}\nabla_{x_1} g^{-1}(x_1)\\ I\end{pmatrix}_{(k+m)\times k}\mu_1\parallel_2\parallel\begin{pmatrix}\nabla_{x_1} g^{-1}(x_1)\\ I_{m\times k}\end{pmatrix}_{(k+m)\times k}\mu_2 \parallel_2}\right)^2\right]\\
    &=\E[\textrm{cos}^2(\nabla_x w_1(x),\nabla_x w_3(x))]
\end{align*}
The strict inequality holds by $\mu_1^T\nabla_{x_1} g^{-1}(x_1)^T\nabla_{x_1} g^{-1}(x_1)_{k\times m}(\mu_1)_{:m}<0 $ 
\begin{align*}
\begin{pmatrix}\nabla_{x_1} g^{-1}(x_1)\\ I_{m\times k}\end{pmatrix}_{(k+m)\times k}\mu_2 \parallel_2&=\parallel\begin{pmatrix}\nabla_{x_1} g^{-1}(x_1)\\ 0\end{pmatrix}_{(k+m)\times k}\mu_2 +\begin{pmatrix}0\\ I_{m\times k}\end{pmatrix}_{(k+m)\times k}\mu_2 \parallel_2\\
&=\left(\parallel\begin{pmatrix}\nabla_{x_1} g^{-1}(x_1)\\ 0\end{pmatrix}_{(k+m)\times k}\mu_2 \parallel_2^2+\parallel\begin{pmatrix}0\\ I_{m\times k}\end{pmatrix}_{(k+m)\times k}\mu_2 \parallel_2^2\right)^\frac{1}{2}\\
&\ge \parallel\begin{pmatrix}0\\ I_{m\times k}\end{pmatrix}_{(k+m)\times k}\mu_2 \parallel_2    
\end{align*}
 To verify that $\mu_1^T\nabla_{x_1} g^{-1}(x_1)^T\nabla_{x_1} g^{-1}(x_1)_{k\times m}(\mu_1)_{:m}<0 $ does exist, pick $k=3,m=2$, $g^{-1}(x)=\begin{pmatrix}
2 & -3 & 3\\
1&-1&5\\
1&-1&1
\end{pmatrix}x$, and $\mu_1 = \begin{pmatrix}
1 \\ 1 \\ 1
\end{pmatrix}$. We have $\mu_1^T\nabla_{x_1} g^{-1}(x_1)^T\nabla_{x_1} g^{-1}(x_1)_{k\times m}(\mu_1)_{:m}=-2$ in this case.

Together, we have $\gL_{w_1}(w_3)<\gL_{w_1}(w_2)$.
\end{proof}

{ Proposition~\ref{prop:non-min} shows that the orthogonal classifier $w_2$ is not the minimizer of the loss with input gradient penalty. The results above also hold for un-normalized version of input gradient penalty in \cite{Teney2021EvadingTS} by similar analysis.}


\end{document}